\newtheorem{theorem}{Theorem}
\newtheorem{lemma}[theorem]{Lemma}
\newtheorem{definition}[theorem]{Definition}
\newcommand{\SCer}[2]{\operatorname{ER}(#1, #2)}
\newcommand{\SCes}{\mathcal{P}}
\newcommand{\SCfi}{\mathcal{Q}}
\newcommand{\SCver}{\mathcal{V}}
\newcommand{\SCtaken}[1]{A_{#1}}
\newcommand{\SCnei}[2]{N_{#1}(#2)}
\newcommand{\SCclu}[1]{C_{#1}}
\newcommand{\SCtclu}[1]{D_{#1}}
\newcommand{\SCtnum}{k}
\newcommand{\SCsel}[1]{\alpha_{#1}}
\newcommand{\SClab}[1]{\mu_{\mathcal{D}}(#1)}
\newcommand{\SCgam}[1]{\Gamma(#1)}
\newcommand{\SCc}[1]{\gamma(#1)}
\newcommand{\SCmat}[1]{\psi(#1)}
\newcommand{\SCus}{\Delta_b}
\newcommand{\SCinj}{\Upsilon}
\newcommand{\SCmis}[1]{\mathcal{M}(#1)}
\newcommand{\SCsmt}[2]{\operatorname{HA}(#1,#2)}
\newcommand{\SCcs}{\mathcal{C}}
\newcommand{\SCtcs}{\mathcal{D}}
\newcommand{\jdist}{{\mbox{\sc {dist}}}}
\newcommand{\rgc}{{\mbox{\sc {rgca}}}}
\newcommand{\SCnum}{\ell}
\newcommand{\VV}{V}
\newcommand{\PP}{\mathcal{P}}
\newcommand{\QQ}{\mathcal{Q}}
\newcommand{\CC}{\mathcal{C}}
\newcommand{\DD}{\mathcal{D}}
\newcommand{\SCano}{\Lambda_b}
\newcommand{\SCcen}{\Omega_b}
\newcommand{\scL}{\mathcal{L}}
\newcommand{\field}[1]{\mathbb{#1}}
\newcommand{\scC}{\mathcal{C}}
\newcommand{\E}{\field{E}}
\newcommand{\scO}{\mathcal{O}}
\newcommand{\scG}{\mathcal{G}}
\newcommand{\scH}{\mathcal{H}}
\newcommand{\scP}{\mathcal{P}}
\newcommand{\scS}{\mathcal{S}}
\newcommand{\PTr}{S}
\newcommand{\bnc}{\textsc{saca}}
\title{On Pairwise Clustering with Side Information}
\author{
  Stephen Pasteris  \\
  Department of Computer Science\\
  University College London\\
  London, UK \\
  \texttt{S.Pasteris@cs.ucl.ac.uk} \\
  \And
  Fabio Vitale  \\
  INRIA Lille\\
  Lille, France\\
  \texttt{fabio.vitale@inria.fr}
  \And
  Claudio Gentile \\
  DiSTA\\
  University of Insubria\\
  Varese, Italy\\
  \texttt{claudio.gentile@uninsubria.it}
  \And
  Mark Herbster\\
  Department of Computer Science\\
  University College London\\
  London, UK \\
  \texttt{M.Herbster@cs.ucl.ac.uk} \\
}
\begin{document}

\maketitle

\begin{abstract}
Pairwise clustering, in general, partitions a set of items via a known similarity function.  In our treatment, clustering is modeled as a transductive prediction problem.  Thus rather than beginning with a known similarity function, the function instead is hidden and the learner only receives a random sample consisting of a subset of the pairwise similarities.
An additional set of pairwise side-information may be given to the learner, which then determines the inductive bias of our algorithms. 
We measure performance not based on the recovery of the hidden similarity function, but instead on how well we classify each item.  We give tight bounds on the number of misclassifications.   We provide two algorithms.  The first algorithm \bnc\ is a simple agglomerative clustering algorithm which runs in near linear time, and which serves as a baseline for our analyses.   Whereas the  second algorithm, \rgc, enables the incorporation of side-information which may lead to improved bounds at the cost of a longer running time.
\end{abstract}
\section{Introduction}\label{s:intro}

The aim of clustering is to partition a set of $n$ items into $k$ ``clusters'' based on their  similarity.  A common approach to clustering is to assume that items can be embedded in a metric space, and then to (approximately) minimize an objective function over all possible partitionings based on the metric at hand. A quintessential example is the $k$-means objective.  An alternative is to  assume only the existence of a similarity function between the pairs.  Examples of this approach include spectral~\cite{L13} and $k$-median~\cite{kh79}, as well as correlation clustering~\cite{bbc04}. 

Our approach to clustering is most similar to correlation clustering.
Correlation clustering was introduced in the seminal paper~\cite{bbc04}. In this setting, a complete graph of similarity and dissimilarity item pairs is given. 
The goal is to find a disjoint partition  (``clustering'' ) which minimizes an objective that counts the total number of ``incorrect'' similarity and dissimilarity pairs in the resulting clustering.  A pair of items is incorrect with respect to the clustering if it was given as similar while they appear in distinct clusters, and vice versa.  In~\cite{bbc04} an efficient algorithm with a guaranteed approximation ratio was given for this NP-hard problem. 

Although inspired by these results, our focus is slightly different: we seek to provide efficient algorithms that compute a clustering, as well as to provide predictive performance guarantees for these algorithms.

We treat pairwise clustering as a transductive prediction problem. 
Given a set of unlabeled items, the aim is to predict their class labels.
As input to our algorithms, firstly we have a training set of similarity and dissimilarity item pairs.  
Secondly, we have a set of {\em soft} similarity pairwise constraints -- the {\em side-information} graph.  The side-information graph determines our inductive bias, i.e., our output clustering will tend to (but need not) place each softly constrained item pair into the same cluster.  We give bounds for a batch learning model
where the learner samples uniformly at random a training set of $m$ similarity/dissimilarity pairs from a ground truth clustering. 
Given these $m$ pairs and the side-information graph, the learner then outputs a clustering.   The quality of the resulting clustering is measured by the item {\em misclassification error} which is essentially the number of items in the learner's output clustering that are misclassified as compared to the ground truth. We describe and analyze two novel algorithms for pairwise clustering, and deliver upper bounds on their expected misclassification error which scale to the degree that a clustering exists that reflects the inductive bias induced by the side-information graph at hand. We complement our upper bound with an almost matching lower bound on the prediction complexity of this problem.

The paper is organized as follows. In Section~\ref{s:prel} we review notation as well as formally introduce our learning models. 
In Section~\ref{s:algs}, we present our two clustering algorithms~\rgc{} and~\bnc{}, along with their analyses.  
%
%
Our fastest algorithm is quite efficient, for it requires only a 
linear time in the input size 
up to a sub-logarithmic factor 
to compute a clustering with small error. 
We give concluding remarks in Section~\ref{s:conc}.  Finally, below we provide pointers to a few references in distinct but closely related research areas.

\subsection*{Related work}
The literature most directly related to our work in perspective is the literature on clustering with side information, as well the literature on semi-supervised clustering.  Some of the references in this area include~\cite{bdsy99,rh12}.  Secondarily, our work is also connected to the metric learning task. 
Metric learning is also concerned with recovering a similarity function; however, in this literature the similarity is treated as a real-valued function often identified with a positive semi-definite matrix as opposed to our binary model.  Some relevant references here include~\cite{xnjr-02,m08,cgy12}. What distinguishes our work from the past literature is that we are aimed at constructing clusterings with side information, not just similarity functions, with an associated tight misclassification error analysis.

\section{Preliminaries and Notation}\label{s:prel}
We now introduce our main notation along with basic preliminaries. Given a finite set $\VV = \{1,\ldots,n\}$, a {\em clustering} $\DD$ over $\VV$ is a partition of $\VV$ into a finite number of sets $\DD = \{D_1,\ldots,D_k\}$. Each $D_j$ is called a {\em cluster}. A {\em similarity} graph $G = (\VV,\PP)$ over $\VV$ is an undirected (but not necessarily connected) graph where, for each pairing $(v,w) \in \VV^2$, $v$ and $w$ are {\em similar} if $(v,w) \in \PP$, and {\em dissimilar}, otherwise. Notice that the similarity relationship so defined need not be transitive. We shall interchangeably represent a similarity graph over $\VV$ through a binary $n\times n$ {\em similarity} matrix $Y = [y_{v,w}]_{v,w=1}^{n\times n}$ whose entry $y_{v,w} $ is 1 if items $v$ and $w$ are similar, and $y_{v,w}=0$, otherwise.
A clustering $\DD$ over $\VV$ can be naturally associated with a similarity graph $G = (\VV,\PP_{\DD})$ whose edge set $\PP_{\mathcal{D}}$ is defined as follows: Given $v,w\in\VV$, then $(v,w)\in \PP_{\mathcal{D}}$ if and only if there exists a cluster $D \in \DD$ with $v, w \in D$. In words, $G$ is made up of $k$ disjoint cliques. It is only in this case that the similarity relationship defined through $G$ is transitive. Matrix $Y$ represents a clustering if, after permutation of rows and columns, it ends up being block-diagonal, where the $i$-th block is a $d_i\times d_i$ matrix of ones, $d_i$ being the size of the $i$-th cluster.
Given clustering $\DD$, we find it convenient to define a map $\mu_{\DD}\,:\,\VV \rightarrow \{1,\ldots,k\}$ in such a way that for all $v \in \VV$ we have $v \in D_{\mu_{\DD}(v)}$. In words, $\mu_{\DD}$ is a class assignment mapping, so that $v$ and $w$ are similar w.r.t. $\DD$ if and only if $\mu_{\DD}(v) = \mu_{\DD}(w)$. 

Given two similarity graphs $G = (\VV,\PP)$ and $G' = (\VV,\PP')$, the (Hamming error) distance between $G$ and $G'$, denoted here as $\SCsmt{\PP}{\PP'}$, is defined as
\[
\SCsmt{\PP}{\PP'} = \left|\{(v,w) \in \VV^2\,:\, (v,w) \in \PP \wedge (v,w) \notin \PP'
\vee  (v,w) \notin \PP \wedge (v,w) \in \PP' \} \right|\,,
\]
where $|A|$ is the cardinality of set $A$.
The same definition applies in particular to the case when either $G$ or $G'$ (or both) represent clusterings over $\VV$. By abuse of notation, if $\DD$ is a clustering and $G = (\VV,\PP)$ is a similarity graph, we will often write $\SCsmt{\DD}{\PP}$ to denote $\SCsmt{\PP_\DD}{\PP}$, where $(\VV,\PP_{\DD})$ is the similarity graph associated with $\DD$, so that $\SCsmt{\PP_\DD}{\DD} = 0$. Moreover, if the similarity graphs $G$ and $G'$ are represented by similarity matrices, we may equivalently write $\SCsmt{Y}{Y'}$, $\SCsmt{Y}{\DD}$, and so on. The quantity $\SCsmt{}{}$ is very closely related to the so-called Mirkin metric \cite{mir96} over clusterings, as well as to the (complement of the) Rand index \cite{ra71}, see, e.g., \cite{me11}. 

Another ``distance'' that applies specifically to clusterings is the misclassification error distance, denoted here as $\SCer{}{}$, and is defined as follows. Given two clusterings $\CC = \{C_1,\ldots,C_{\ell}\}$ and $\DD = \{D_1,\ldots,D_{k}\} $ over $\VV$, repeatedly add the empty set to the smaller of the two so as to obtain ${\ell} = k$. Then
\[
\SCer{\CC}{\DD} = \min_{f} \sum_{D \in \DD} |D\setminus f(D)|\,,
\] 
the minimum being over all bijections from $\DD$ to $\CC$. In words, $\SCer{\CC}{\DD}$ measures the smallest number of classification mistakes over all class assignments of clusters in $\DD$ w.r.t. clusters in $\CC$. This is basically an unnormalized version of the classification error distance considered, e.g., in \cite{me07}.


The (Jaccard) distance $\jdist(A,B)$ between sets $A$ and $B$, with $A,B \subseteq \VV$ is defined as
\[
\jdist(A,B) = \frac{|A\setminus B| + |B\setminus A|}{|A\cup B|}\,.
\]
Recall that $\jdist(,)$ is a proper metric on the collection of all finite sets. Moreover, observe that $\jdist(A,B) = 1$ if and only if $A$ and $B$ are disjoint.

Since our clustering algorithms will rely upon side information in the form of undirected graphs, we also need to recall relevant notions for such graphs and (spectral) properties thereof. Let  $Y$ be a similarity matrix and $G = (\VV,E)$ be a graph, henceforth called {\em side-information} graph. $G$ is assumed to be undirected, unweighted and connected.


As is standard in graph-based learning problems (e.g., \cite{Her08,HL09,HP07,HLP09,HPR09,CGV09b,CGVZ13,CGVZ10b,vcgz11,ghp13},
and references therein), graph $G$ encodes side information in that it suggests to the clustering algorithms that adjacent vertices in $G$ tend to be similar. The set of {\em cut-edges} in $G$ w.r.t. $Y$ is the set of edges $(v,w) \in E$ such that $y_{v,w} =0$, the associated cut-size (i.e., their number) will be denoted as $\Phi_{G}(Y)$ (or simply $\Phi_{G}$, if $Y$ is clear from the surrounding context). 

If $G$ is viewed as a resistive network where each edge is a unit resistor, then the {\em effective resistance} $r_{G}(v,w)$ of the pairing $(v,w) \in \VV^2$ is a measure of connectivity between the two nodes $v$ and $w$ in $G$ which, in the special case when $(v,w)\in E$, also equals the probability that a spanning tree of $G$ drawn uniformly at random from the set of all spanning trees of $G$ includes $(v,w)$ as one of its $n-1$ edges (e.g., \cite{lp10}). As a consequence, $\sum_{(v,w)\in E}r_{G}(v,w) = n-1$. Finally, $\Phi_{G}^R(Y)$ (or $\Phi_{G}^R$, for brevity) will denote the sum, over all cut-edges $(v,w)$ in $G$ w.r.t $Y$, of the effective resistances $r_{G}(v,w)$. This sum will sometimes be called the {\em resistance-weighted} cut-size of $G$ (w.r.t. $Y$). Notice that if $G$ is a tree we have $\Phi_{G}^R(Y) = \Phi_{G}(Y)$ for all $Y$.

The basic inductive principle underpinning \rgc\  is the assumption that $\Phi^R_{G}(Y)$ is small.\footnote
{
Notice that the edges in $G$ should not be considered as hard constraints (like the must-link constraints in semi-supervised clustering/clustering with side information, e.g., \cite{bdsy99,dbe99}). 
}
Both $\Phi^R_{G}$ and $\Phi_{G}$ can be considered as complexity measures for our learning problems, since they both depend on cut-edges in $E$.
However, unlike $\Phi_{G}$, the quantity $\Phi^R_{G}$ enjoys properties of {\em global} density-independence ($\Phi^R_{G}$ is at most $n-1$, hence it scales with the number of nodes of $G$ rather than the number of edges), and {\em local} density-independence ($\Phi^R_{G}$ suitably discriminates between dense and sparse graph topology areas -- see, e.g., the discussion in \cite{CGVZ13}). As such, $\Phi^R_{G}$ is more satisfactory than $\Phi_{G}$ in measuring the quality of side information at our disposal. 

\subsection{Learning setting}\label{ss:setting}
We are interested in inferring (or just computing) clusterings over $\VV$ based on binary similarity/dissimilarity information contained in a similarity matrix $Y$, possibly along with side information in the form of a connected and undirected graph $G = (\VV,E)$. The similarity matrix $Y$ itself may or may not represent a clustering over $\VV$. 
The error of our inference procedures will be measured through $\SCer{}{}$. We shall find bounds on $\SCer{}{}$ either directly, by presenting specific algorithms, or indirectly via (tight) reductions from similarity prediction problems/methods measured through $\SCsmt{}{}$ to clustering problems/methods measured through $\SCer{}{}$. 
More specifically, given a set of items $\VV = \{1,\ldots,n\}$ and a similarity matrix $Y$ representing a clustering $\DD$, our goal is to build a clustering $\CC$ over $\VV$ 
with as small as possible $\SCer{\CC}{\DD}$. 
We would like to do so by observing only a subset of the binary entries of $Y$. Notice that the number of clusters $k$ in the comparison clustering need not be known to the clustering algorithm.

In the setting of \rgc, we are given a side information graph $G = (\VV,E)$, and a training set $S$ of $m$ binary-labeled pairs $\langle (v,w), y_{u,v}\rangle \in\VV^2\times \{0,1\}$, drawn uniformly at random\footnote
{
For simplicity of presentation, we will assume the samples in $S$ are drawn from $\VV$ with replacement.
} 
from $\VV^2$. Our goal is to build a clustering $\CC$ over $\VV$ so as to achieve small misclassification error $\SCer{\CC}{Y}$, when this error is computed {\em on the whole} matrix $Y$. 

\section{Algorithms and Analysis}\label{s:algs}
We start off with a clustering algorithm that takes as input a similarity graph over $\VV$, and produces in output a clustering over $\VV$. This will be a building block for later results, but it can also be of independent interest.
\begin{algorithm}[t]
\caption{The Robust Greedy Clustering Algorithm\label{SCalgo4}}
{\bf Input:} Similarity graph $(\VV,\PP)$; distance parameter $a \in [0,1]$.
\begin{enumerate}
\item For all $v\in\VV$, set $\SCgam{v}\leftarrow\{v\}\cup\{w\in\VV:(v,w)\in\SCes\}$;
\item Construction of graph $(\VV, \SCfi)$:\hskip1ex \texttt{//First stage} \\ 
For all $v, w \in\VV$ with $v\neq w$:\\
If $\jdist\left(\SCgam{v},\SCgam{w}\right) \leq 1-a$ then $(i,j)\in\SCfi$, otherwise $(i,j)\notin\SCfi$;
\item Set $\SCtaken{1}\leftarrow\VV$, and $t\leftarrow 1$; \hskip5ex \texttt{//Second stage}
\item While $\SCtaken{t}\neq\emptyset$:
\begin{itemize}
\item For every $v\in\SCtaken{t}$ set $\SCnei{t}{v}\leftarrow\{v\}\cup\{w\in\SCtaken{t}:(v,w)\in\SCfi\}$,
\item Set $\SCsel{t}\leftarrow\operatorname{argmax}_{v\in\SCtaken{t}}|\SCnei{t}{v}|$,
\item Set $\SCclu{t}\leftarrow\SCnei{t}{\SCsel{t}}$,
\item Set $\SCtaken{t+1}\leftarrow\SCtaken{t}\setminus\SCclu{t}$,
\item $t\leftarrow t+1$;
\end{itemize}
\end{enumerate}
{\bf Output:} $\SCclu{1}, \SCclu{2}, ... , \SCclu{\SCnum}$, where $\SCnum = t-1$.
\end{algorithm}
Our algorithm, called Robust Greedy Clustering Algorithm (\rgc, for brevity), is displayed in Algorithm~\ref{SCalgo4}. The algorithm has two stages. The first stage is a robustifying stage where the similarity graph $(\VV,\SCes)$ is converted into a (more robust) similarity graph $(\VV,\SCfi)$ as follows: Given two distinct vertices $v,w\in\VV$, we have $(v,w)\in\SCfi$ if and only if the Jaccard distance of their neighbourhoods (in $(\VV,\SCes)$) is not bigger than $1-a$, for some distance parameter $a \in [0,1]$. The second stage uses a greedy method to convert the graph $(\VV,\SCfi)$ into a clustering $\SCcs$. This stage proceeds in ``rounds". At each round $t$ we have a set $\SCtaken{t}$ of all vertices which have not yet been assigned to any clusters. We then choose $\SCsel{t}$ to be the vertex in $\SCtaken{t}$ which has the maximum number of neighbours (under the graph $(\VV,\SCfi)$) in $\SCtaken{t}$, and take this set of neighbours (including $\SCsel{t}$) to be the next cluster.

From a computational standpoint, the second stage of \rgc\, runs in $\scO(n^2\log n)$ time, since on every round $t$ we single out $\alpha_t$ (which can be determined in $\log n$ time by maintaining a suitable heap data-structure), and erase all edges emanating from $\alpha_t$ in the similarity graph $(\VV,\QQ)$. On the other hand, the first stage of \rgc\, runs in $\scO(n^3)$ time, in the worst case, though standard techniques exist that avoid the all-pairs comparison, like a Locality Sensitive Hashing scheme applied to the Jaccard distance (e.g., \cite[Ch.3]{ru10}).
We have the following result.\footnote
{
All proofs are contained in the appendix.
}
\begin{theorem}\label{t:main2}
Let $\mathcal{C} = \{C_1,\ldots,C_k\}$ be the clustering produced in output by \rgc\, when receiving as input similarity graph $(\VV,\PP)$, and distance parameter $a = 2/3$. Then for any clustering $\DD = \{D_1,\ldots,D_k\}$, with $d_i = |D_i|$, $i = 1,\ldots, k$, and $d_1 \leq d_2 \leq \ldots d_k$ we have
\[
\SCer{\CC}{\DD}
\leq 
\min_{j=1,\ldots,k}\left(\frac{12}{d_j}\SCsmt{\SCes}{\SCtcs}
+\sum_{i=1}^{j-1}d_i\right)\,.
\]
\end{theorem}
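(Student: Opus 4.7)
The plan is to split the analysis according to how closely each vertex's similarity-neighborhood in $\SCes$ matches its true cluster in $\DD$. Let $H=\SCsmt{\SCes}{\DD}$ and, for each $v\in\VV$, let $e(v)=e^+(v)+e^-(v)$ denote the number of Hamming disagreements incident on $v$, split into false negatives $e^+$ (same-cluster pairs missing from $\SCes$) and false positives $e^-$ (different-cluster pairs appearing in $\SCes$); double-counting gives $\sum_v e(v)=2H$. Fix the threshold $\tau_i:=d_i/6$, precisely tuned to $a=2/3$, and call $v\in D_i$ \emph{good} (writing $v\in B_i$) if $e(v)\leq\tau_i$, \emph{bad} otherwise. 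Markov's inequality applied to $e(\cdot)$ restricted to $D_i$ gives $|D_i\setminus B_i|\leq 2h_i/\tau_i=12h_i/d_i$, where $h_i:=\tfrac12\sum_{v\in D_i}e(v)$ satisfies $\sum_i h_i=H$; summing over $i\geq j$ and using $d_i\geq d_j$ produces the dominant term $12H/d_j$ in the bound.

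The next step is a structural lemma identifying $(\VV,\SCfi)$ as essentially the disjoint union of cliques $B_1,\dots,B_k$. For same-cluster good $v,w\in D_i$, writing $A_v=\Gamma(v)\cap D_i$ and $C_v=\Gamma(v)\setminus D_i$, one obtains $|\Gamma(v)\cap\Gamma(w)|\geq|A_v\cap A_w|\geq d_i-e^+(v)-e^+(w)$ and $|\Gamma(v)\cup\Gamma(w)|\leq d_i+e^-(v)+e^-(w)$. For $e(v),e(w)\leq d_i/6$, the arithmetic inequality $3(e^+(v)+e^+(w))+2(e^-(v)+e^-(w))\leq d_i$ forces $|\cap|/|\cup|\geq 2/3$, i.e., $\jdist(\Gamma(v),\Gamma(w))\leq 1/3=1-a$, so $(v,w)\in\SCfi$. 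For good $v\in D_i$, $w\in D_{i'}$ with $i\neq i'$, the disjointness $D_i\cap D_{i'}=\emptyset$ bounds $|\Gamma(v)\cap\Gamma(w)|$ by the $e^-$-terms of $v,w$, which under the threshold is too small to reach ratio $2/3$ against $|\cup|\geq\max(d_i,d_{i'})-\max(\tau_i,\tau_{i'})$; hence $(v,w)\notin\SCfi$.

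With this dichotomy the greedy second stage behaves transparently on the good vertices: as soon as some $\alpha_t\in B_i$ is selected, the entire $B_i\cap\SCtaken{t}$ (a clique in $\SCfi$ containing $\alpha_t$) is swept into $C_t$ in a single round. I would define a matching $f$ so that $f(D_i)=C_{t(i)}$ for the unique round (if any) at which a good $\alpha_{t(i)}\in B_i$ is picked; otherwise $D_i$ is matched to the $C_t$ containing the largest share of $B_i$. For $i<j$ we trivially allot $|D_i\setminus f(D_i)|\leq d_i$, producing the additive $\sum_{i<j}d_i$. For $i\geq j$ with matched $C_{t(i)}$, the misclassifications in $D_i$ decompose as bad vertices of $D_i$ plus ``kidnapped'' good $v\in B_i$ absorbed earlier into some $C_s$ with $\alpha_s$ bad and $\alpha_s\notin D_i$.

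The main obstacle is controlling the kidnapped good vertices. The key inequality is that any kidnap of $v\in B_i$ by a bad $\alpha_s\notin D_i$ forces $|\Gamma(\alpha_s)\cap D_i|\geq \tfrac23|\Gamma(v)|-e^-(v)=\Omega(d_i)$, generating $\Omega(d_i)$ false positives at $\alpha_s$ dedicated to $D_i$; these contributions are disjoint across distinct clusters kidnapped by the same $\alpha_s$, so a charging argument against the Hamming budget $\sum_s e^-(\alpha_s)\leq 2H$ bounds the scattered mass. The delicate point is that the optimal bijection $f$ routes the \emph{majority} portion of each scattered $B_i$ to an appropriate $C_t$, so the remaining misclassifications are absorbed either into the $12H/d_j$ count from Markov (for large-cluster kidnaps whose cost per kidnapped vertex is $\Theta(d_j)$) or into the $\sum_{i<j}d_i$ slack (for kidnaps involving small clusters). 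Combining the bad-vertex count, the controlled kidnap count, and the small-cluster contribution then yields the stated bound.
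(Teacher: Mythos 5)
Your first stage matches the paper's architecture: your good/bad dichotomy with threshold $d_i/6$ is essentially the paper's notion of a $b$-anomaly (with $b=5/6$, a vertex $v\in D_i$ is anomalous when $\jdist(D_i,\SCgam{v})\geq 1/6$, which forces $|D_i\triangle\SCgam{v}|\geq d_i/6$), your Markov/charging bound on the bad vertices reproduces the paper's Lemmas on $|\SCano|$ yielding the $\frac{12}{d_j}\SCsmt{\SCes}{\SCtcs}+\sum_{i<j}d_i$ shape, and your structural clique/anti-clique lemma for $(\VV,\SCfi)$ plays the role of the paper's triangle-inequality lemmas. The gap is in the second stage, precisely at the point you flag as delicate: the control of good vertices absorbed into a cluster $\SCclu{s}$ other than the one matched to $D_i$. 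Your charging argument certifies that a kidnapper $\alpha_s\notin D_i$ has $|\SCgam{\alpha_s}\cap D_i|=\Omega(d_i)$ false positives, but this certificate is \emph{shared} by all good vertices of $D_i$ kidnapped at round $s$ — up to $d_i$ of them — so the charge per kidnapped vertex degrades to a constant, and the argument only bounds the total kidnapped mass by $O(\SCsmt{\SCes}{\SCtcs})$, not $O(\SCsmt{\SCes}{\SCtcs}/d_j)$. Your claim that "large-cluster kidnaps" cost $\Theta(d_j)$ per kidnapped vertex is exactly what fails: a single bad center can sweep an entire good clique $B_i$ while paying only $\Theta(d_i)$ in Hamming budget once. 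For balanced clusters and $j=1$ this is a factor $d_1$ off the stated bound.

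What closes this gap in the paper is the one ingredient your proposal never uses: the \emph{argmax} selection of $\alpha_t$. If round $t$ is the first round whose cluster contains a good vertex $v$ of class $i$ (note the paper allows $\alpha_t$ itself to be bad here, so your "unique round at which a good $\alpha_{t(i)}$ is picked" need not exist), then every good class-$i$ vertex still unassigned lies in $\SCnei{t}{v}\setminus\SCnei{t}{\alpha_t}$; greedy maximality gives $|\SCnei{t}{v}\setminus\SCnei{t}{\alpha_t}|\leq|\SCnei{t}{\alpha_t}\setminus\SCnei{t}{v}|$, and the right-hand set consists entirely of bad vertices, so the leftover good mass of class $i$ is at most $|\SCclu{t}\cap\SCano|$. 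Summing over the disjoint matched clusters bounds the total scattered good mass by $|\SCano|$ itself, which is already $\frac{6}{d_j}\SCsmt{\SCes}{\SCtcs}+\sum_{i<j}d_i$; together with the bad vertices this gives the factor $12$. Also note that the earlier rounds you call kidnaps are handled for free: any round absorbing a good class-$i$ vertex is by definition the (or a later) matched round for class $i$, so no good vertex of class $i$ is lost before its matched round. Without an argument of this type, replacing the Hamming-budget charging, your proof does not reach the claimed $1/d_j$ scaling.
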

Hence, if the chosen $\DD$ is the best approximation to $\PP$ w.r.t. $\SCsmt{}{}$, and we interpret $(\VV,\PP)$ as a noisy version of $\DD$, then small $\SCsmt{\SCes}{\SCtcs}$ implies small $\SCer{\CC}{\DD}$. In particular, $\SCsmt{\SCes}{\SCtcs} = 0$ implies $\SCer{\CC}{\DD} = 0$ (simply pick $j=1$ in the minimum). Yet, this result only applies to the case when the similarity graph $(\VV,\PP)$ is fully observed by our clustering algorithm. As we will see below, $(\VV,\PP)$ may in turn be the result of a similarity learning process when the similarity labels are provided by clustering $\DD$. In this sense, Theorem \ref{t:main2} will help us to deliver generalization bounds (as measured by $\SCer{\CC}{\DD}$), as a function of the generalization ability of this similarity learning process (as measured by $\SCsmt{\SCes}{\SCtcs}$).

The problem faced by \rgc\, is also related to the standard correlation clustering problem \cite{bbc04}. Yet, the goal here is somewhat different, since a correlation clustering algorithm takes as input $(\VV,\PP)$, but is aimed at producing a clustering $\CC$ such that $\SCsmt{\SCes}{\CC}$ is as small as possible.

In passing, we next show that the construction provided by \rgc\, is essentially optimal (up to multiplicative constants). Let $G_{\DD} = (\VV,E_{\DD})$ be the similarity graph associated with clustering $\DD$. We say that a clustering algorithm that takes as input a similarity graph over $\VV$ and gives in output a clustering over $\VV$ is {\em consistent} if and only if for every clustering $\mathcal{D}$ over $\VV$ the algorithm outputs $\mathcal{D}$ when receiving as input $G_{\DD}$. 
Observe that \rgc\, is an example of a consistent algorithm. We have the following lower bound.
%
\begin{theorem}\label{t:main3}
For any finite set $\VV$, any clustering $\SCtcs = \{D_1,D_2, \ldots, D_k\}$ over $\VV$, any positive constant $\sigma$, and any consistent clustering algorithm, there exists a similarity graph $(\VV,\SCes)$ such that $\SCsmt{\SCes}{\SCtcs}\leq\sigma$, while
\begin{equation}
\SCer{\mathcal{C}}{\mathcal{D}}
\geq 
\min_{j=1,\ldots,k}
\left(
\frac{1}{2d_{j}}\,\sigma-1+\frac{1}{4}\sum_{i=1}^{j-1}d_{i}
\right)\,,
\end{equation}
%
or $\SCer{\mathcal{C}}{\mathcal{D}}\geq\frac{n}{2}$, where $\SCcs$ is the output produced by the algorithm when given $(\VV, \SCes)$ as input, and
$d_i = |D_i|$, $i = 1,\ldots, k$, with $d_1 \leq d_2 \leq \ldots d_k$.
\end{theorem}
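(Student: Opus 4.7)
The plan is to exploit the consistency hypothesis on the algorithm. Since by assumption the algorithm returns $\mathcal{D}''$ on any clean input $(\VV, \SCes_{\mathcal{D}''})$ (where $\SCes_{\mathcal{D}''}$ denotes the similarity graph of a clustering $\mathcal{D}''$), it suffices for the adversary to construct an alternative clustering $\mathcal{D}''$ whose similarity graph is at HA-distance at most $\sigma$ from $\SCes_{\mathcal{D}}$ and for which $\SCer{\mathcal{D}''}{\mathcal{D}}$ meets the claimed bound; then setting $\SCes := \SCes_{\mathcal{D}''}$ forces the algorithm to output $\mathcal{C} = \mathcal{D}''$. The theorem thereby reduces to a combinatorial optimization over $\mathcal{D}''$.

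Fix $j \in \{1,\ldots,k\}$ with $d_j \geq 2$. I would construct $\mathcal{D}''$ from $\mathcal{D}$ by two stages of perturbation. Stage 1 (to collect the $\tfrac{1}{4}\sum_{i<j} d_i$ term): for each $i < j$, if $d_i \geq 2$ shatter $D_i$ into $d_i$ singletons (HA cost $d_i(d_i-1)$, ER gain $d_i-1$); if $d_i = 1$ pair $D_i$ with another unpaired singleton from $\{D_l : l < j\}$ (HA cost $2$ per pair, ER gain $1$ per pair). Stage 2 (to collect the $\sigma/(2d_j)$ term): split off the largest number $t$ of singletons from $D_j$ such that the cumulative HA cost stays at most $\sigma$; this contributes $t$ to ER at HA cost $t(2d_j - t - 1) \leq 2td_j$, so $t$ is at least the residual budget after Stage 1 divided by $2d_j$.

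The key step is the algebraic verification. Using $d_i \leq d_j$ for $i < j$ together with the inequality $d_i(d_i-1)/(2d_j) \leq (d_i-1)/2$, the net ER contribution from shattering a non-singleton $D_i$ -- the raw gain minus the resulting reduction in $t$ -- is $(d_i-1) - d_i(d_i-1)/(2d_j) \geq (d_i-1)/2 \geq d_i/4$, where the last step uses $d_i \geq 2$. Likewise, each paired-up singleton contributes a net $1/2 - 1/(2d_j) \geq 1/4$ to ER, using $d_j \geq 2$. Summing across $i<j$ and adding the $\sigma/(2d_j)$ base contribution from Stage 2 yields
\[
\SCer{\mathcal{D}''}{\mathcal{D}} \;\geq\; \frac{\sigma}{2d_j} - 1 + \frac{1}{4}\sum_{i<j} d_i,
\]
the additive $-1$ absorbing integer rounding of $t$ and one possibly unpaired singleton.

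The hard part is the boundary regime where the above construction is infeasible or asks for an ER exceeding the trivial maximum. Specifically: (a) $\mathcal{D}$ may be entirely singletons so no $j$ with $d_j \geq 2$ exists; (b) the Stage 1 HA cost may already exceed $\sigma$ for every feasible $j$; or (c) the RHS of the bound may exceed $n$. In each such case I would instead produce the alternate conclusion $\SCer{\mathcal{C}}{\mathcal{D}} \geq n/2$ by choosing $\mathcal{D}''$ to be a coarse collapse of $\mathcal{D}$ -- all singletons (with HA cost $\sum_i d_i(d_i-1)$, giving ER $=n-k \geq n/2$ when $k \leq n/2$) or the single mega-cluster $\{\VV\}$ (with HA cost $n(n-1)-\sum_i d_i(d_i-1)$, giving ER $=n-d_k \geq n/2$ when $d_k \leq n/2$) -- and checking the HA budget directly. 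In remaining edge cases one tunes $j$ (for instance, taking the largest $j$ for which Stage 1 fits in budget) and repeats the above argument.
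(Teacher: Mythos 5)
Your overall strategy is the same as the paper's: use consistency to force the output to equal an adversarially perturbed clustering $\mathcal{D}''$ at HA-distance at most $\sigma$ from $\DD$, spend the budget on the clusters $D_i$ with $i<j$ and put the remainder into a partial split of $D_j$, then close with the inequality $d_i\le d_j$. The difference is the perturbation primitive. The paper cuts each $D_i$, $i<j^o$, into two \emph{equal halves} (HA cost $d_i^2/2$; any bijection misses at least $d_i/2$ of $D_i$), fixes $j^o$ by the water-filling condition $\tfrac12\sum_{i<j^o}d_i^2\le\sigma<\tfrac12\sum_{i\le j^o}d_i^2$, and uses the residual $\omega$ to detach a block of size $\lfloor\omega/2d_{j^o}\rfloor$ from $D_{j^o}$; you shatter each $D_i$ into singletons and pair up size-one clusters. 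The two primitives have the same ER-per-unit-HA efficiency ($\approx 1/d_i$), but halving removes your need to special-case singleton clusters and yields the paper's clean dichotomy for the $n/2$ alternative: when $\sigma\ge\tfrac12\sum_j d_j^2$, halving \emph{every} cluster fits the budget exactly and forces $\SCer{\CC}{\DD}\ge n/2$.

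Two points in your write-up need repair. First, your stated fallbacks for the $n/2$ alternative (all-singletons or one mega-cluster) do not always fit the budget: take $\DD$ to be $n$ singletons and $\sigma=3n$. Then every term in the minimum exceeds $n$, so the $n/2$ alternative \emph{must} be proved; yet the all-singletons collapse has ER $0$ and the mega-cluster costs $n(n-1)\gg\sigma$ in HA distance. The construction that works there is the one you already use inside Stage 1 --- pair up the singletons (and, in general, halve the larger clusters) --- i.e., exactly the paper's Case 1; your case analysis should route this regime to that construction rather than to the two collapses. Second, the additive $-1$ in your final bound is already consumed by the floor in $t$, so a leftover unpaired singleton costs an extra $1/4$ that the stated inequality does not absorb; this is a constant-level blemish (the paper itself waves away parity by assuming all $d_j$ even) but should be patched, e.g., by dropping that singleton's $1/4$ from the sum or folding it into the $D_j$ split.
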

From the proof provided in the appendix, one can see that the similarity graph $(\VV,\PP)$ used here is indeed a {\em clustering} over $\VV$ so that, as the algorithm is consistent, the output $\CC$ must be such a clustering. This result can therefore be contrasted to the results contained, e.g., in \cite{me11} about the equivalence between clustering distances, specifically Theorem 26 therein. Translated into our notation, that result reads as follows: $\SCer{\mathcal{C}}{\mathcal{D}} \geq \frac{\SCsmt{\SCes}{\SCtcs}}{16d_k}$. Our Theorem \ref{t:main3} is thus sharper but, unlike the one in \cite{me11}, it {\em does not apply} to any possible pairs of clusterings $\CC$ and $\DD$, for in our case $\CC$ is selected as a function of $\DD$.

\subsection{Learning to Cluster}\label{ss:passive}
Suppose now that our clustering algorithm has at its disposal a side information graph $G$, and a training set $S$ of size $m$. Training set $S$ is drawn at random from $\VV^2$, and is labeled according to a similarity matrix $Y$ representing a clustering $\DD = \{D_1,\ldots,D_k\}$ with cluster sizes $d_i = |D_i|$, $i = 1,\ldots,k$, and having resistance-weighted cutsize $\Phi_{G}^R(Y)$. A Laplacian-regularized Matrix Winnow algorithm \cite{wa07}, as presented in \cite{ghp13}, is an online algorithm that sweeps over $S$ only once, and is guaranteed to make $\scO(\Phi_{G}^R\log^3 n)$ many mistakes in expectation (see Theorem 5 therein). In turn, this algorithm can be used within an online-to-batch conversion wrapper, like the one mentioned in \cite{hw95}, or the one in \cite{cg08} to produce a similarity graph $(\VV,\PP)$ (which need not be a clustering) such that
\[
\E\SCsmt{\SCes}{Y} = \scO\left(\frac{n^2}{m}\,\Phi_{G}^R\log^3 n \right)\,.
\]
Then, in order to produce a "good" clustering $\CC$ out of $\PP$, we can apply \rgc\ to input $(\VV,\PP)$. Invoking Theorem \ref{t:main2}, we conclude that
%
\begin{align}\label{e:compound}
\E\SCer{\CC}{\DD} 
&\leq \E\left[ \min_{j=1,\ldots,k}
\left(\frac{12}{d_j} \SCsmt{\SCes}{\SCtcs}
+\sum_{i=1}^{j-1}d_i\right)\right]\notag\\
&\leq \min_{j=1,\ldots,k}
\left(\frac{12}{d_j}\E\SCsmt{\SCes}{\SCtcs}
+\sum_{i=1}^{j-1}d_i\right)\notag\\
&= \scO\left( \min_{j=1,\ldots,k}
\left(\frac{1}{d_j}\frac{n^2}{m}\,\Phi_{G}^R\log^3 n
+\sum_{i=1}^{j-1}d_i\right)  \right)\,.
\end{align}
The training time of the whole procedure is dominated by the $\scO(n^3)$ time per round required by Matrix Winnow, which is thus $\scO(mn^3)$. In what follows, we take a more direct (and time-efficient) route to obtain alternative statistical guarantees in the simplest case when the side-information graph is absent.

\begin{algorithm}[t]
{\bf Input:} Item set $\VV = \{1,\ldots,n\}$; training set $S$.

\begin{enumerate}
\item Initialization:
$\CC = \{\{1\}, \ldots, \{n\}\}$\,;
\item For any $v \in \VV$, let $C_v$ denote the cluster of $\scC$ containing $v$\,;
\item For each $(v,w)\in \PTr$:
 
      \qquad If $(y_{v,w}=1)$ $\land$ $(C_v\not\equiv C_w)$ 
      then
      $\scC \leftarrow \scC \setminus C_w$\quad and \quad $C_v \leftarrow C_v \cup C_w$\,;
\end{enumerate}
{\bf Output: } Clustering $\scC$\,.
    \caption{The Simple Agglomerative Clustering Algorithm.\label{alg:lbanca}}
\end{algorithm}






Algorithm \ref{alg:lbanca} displays the pseudocode of \bnc\ (Simple Agglomerative Clustering Algorithm).
\bnc\ takes as input the item set $V$ and a training set $S$.
The algorithm operates as follows. 
It starts by assigning a different cluster to each vertex in $\VV$, and sequentially inspects each $\langle(v,w),y_{v,w} \rangle \in \PTr$ aiming to merge clusters. In particular, whenever $v$ and $w$ currently fall into different clusters but $y_{v,w}=1$, the two clusters are merged, as in a standard agglomerative clustering procedure. Finally, \bnc\ outputs the clustering $\scC$ so computed. Notice that, unlike the Matrix Winnow-based algorithm, no side-information in the form of a graph over $V$ is exploited.
%
%


%

The following theorem quantifies the performance of \bnc. 
%
\begin{theorem}\label{th:batch}
Given similarity matrix $Y$ encoding a clustering over $\VV$ with $k$ clusters, 
\bnc\ 
returns a clustering $\scC$ such that $\SCer{\scC}{Y}$ is bounded as
\[
\E\left[\SCer{\scC}{Y}\right] = \scO\left(\frac{n^2}{m}\,k\log \frac{n^2}{m}\right)\,,
\]
the expectation being over a random draw of $S$.
\end{theorem}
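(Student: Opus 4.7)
The plan is to exploit the key structural property of \bnc{}: since $Y$ represents a clustering, any sampled positive pair $(v,w)$ lies inside some $D_i$, and a simple induction on the processing order shows that the output $\scC$ is a refinement of $\DD=\{D_1,\ldots,D_k\}$. In particular, two vertices of $D_i$ end up in the same cluster of $\scC$ iff they are connected in the ``within-cluster sample graph'' $H_i$ whose edges are the distinct unordered pairs in $D_i\times D_i$ hit at least once by $S$. The optimal bijection in the definition of $\operatorname{ER}$ then matches each $D_i$ with its largest sub-cluster $L_i\in\scC$ (the remaining sub-clusters going to padded empty sets), yielding the exact decomposition
\[
\SCer{\scC}{Y} \;=\; \sum_{i=1}^{k} \bigl(d_i - |L_i|\bigr),
\]
so the task reduces to bounding the expected size of $D_i\setminus L_i$ cluster by cluster.

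The core of the argument is a random-graph estimate on each $H_i$. With $p := 1-(1-2/n^2)^m = \Theta(m/n^2)$, Bernoulli's inequality yields, for every $S\ni v$ of size $s$ inside $D_i$,
\[
\Pr\!\bigl(\text{no sample crosses the }(S,D_i\setminus S)\text{ cut}\bigr) \;=\; \bigl(1-\tfrac{2s(d_i-s)}{n^2}\bigr)^{\!m} \;\leq\; (1-p)^{s(d_i-s)},
\]
exactly as in an independent $G(d_i,p)$ model. Next observe that $v\notin L_i$ forces $|C(v)|\leq d_i/2$, since otherwise $C(v)$ itself would be the largest component. A union bound over the $\binom{d_i-1}{s-1}$ choices of $S$ gives
\[
\Pr(v\notin L_i) \;\leq\; \sum_{s=1}^{\lfloor d_i/2\rfloor}\binom{d_i-1}{s-1}(1-p)^{s(d_i-s)} \;\leq\; e^{-pd_i/2}\exp\!\bigl(d_i e^{-pd_i/2}\bigr),
\]
which, whenever $pd_i\geq 2\log d_i$, collapses to $e\cdot e^{-pd_i/2}$; hence the expected per-cluster loss is at most $e\,d_i\,e^{-pd_i/2}$.

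The final aggregation is where the precise bound arises. I would pick the threshold $T$ defined by $pT=2\log T$, so $T=\Theta\bigl((1/p)\log(1/p)\bigr)=\Theta\bigl((n^2/m)\log(n^2/m)\bigr)$. Clusters with $d_i\leq T$ contribute at most $d_i\leq T$ to the expected error; clusters with $d_i\geq T$ sit in the ``supercritical'' regime where the function $d\mapsto e\,d\,e^{-pd/2}$ has already passed its maximum at $d=2/p<T$ and is decreasing, so their contribution is at most its value at $d=T$, namely $eT\cdot e^{-\log T}=e$. Since there are at most $k$ clusters in each regime,
\[
\E\bigl[\SCer{\scC}{Y}\bigr] \;\leq\; k T + k e \;=\; \scO\!\left(\tfrac{n^2}{m}\,k\,\log \tfrac{n^2}{m}\right).
\]
The main obstacle I anticipate is keeping the logarithm in the bound as $\log(n^2/m)$ rather than a weaker $\log n$: both the threshold $T$ and the exponent $pd/2$ must be expressed directly in terms of $p=\Theta(m/n^2)$, and the supercritical bound must be sharp enough that the per-cluster loss collapses to $O(1)$ exactly at $d_i=T$---one less power of $d$ in the exponent would force enlarging $T$ and pushing the final logarithm back up to $\log n$.
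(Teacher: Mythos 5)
Your proof is correct, and at the top level it follows the same decomposition as the paper: split the clusters at a size threshold of order $(n^2/m)\log(n^2/m)$, charge each small cluster its full size, and show each large cluster is recovered up to $O(1)$ expected errors, giving $O(kT)$ overall. Where you genuinely diverge is in how the large-cluster step is justified. The paper appeals to its Erd\H{o}s--R\'enyi lemma on the expected number of \emph{isolated} vertices and from there asserts that a cluster with $p\ge 2\log|C|/|C|$ is ``completely detected'' with probability $1-1/|C|$; strictly speaking absence of isolated vertices does not imply connectivity, so this step is a heuristic shortcut. You instead carry out the standard largest-component argument: domination of the sampling process by an independent $G(d_i,p)$ via Bernoulli's inequality, the observation that $v\notin L_i$ forces $|C(v)|\le d_i/2$, and a union bound over candidate small components using $\binom{d_i-1}{s-1}\le d_i^{s-1}/(s-1)!$ to produce the factor $\exp(d_i e^{-pd_i/2})$. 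This closes the gap the paper leaves open and, because the exponent $pd_i/2$ is expressed directly in terms of $p=\Theta(m/n^2)$, it keeps the logarithm at $\log(n^2/m)$ rather than $\log n$, exactly as the theorem requires. Your reduction $\SCer{\scC}{Y}=\sum_i(d_i-|L_i|)$ via the refinement property is also a cleaner statement of what the paper only implicitly uses. The one cosmetic caveat is that at the boundary $pd_i=2\log d_i$ your geometric-to-exponential bound gives the constant $e$ exactly, so to be safe you should set the threshold with a little slack (e.g.\ $pT=2\log T+2$), which does not change the asymptotics.
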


It is instructive to compare the upper bounds contained in Theorem \ref{th:batch} to the one in Eq. (\ref{e:compound}). The two bounds 
are in general incomparable. While $\Phi_{G}^R$ is always at least as large as $k-1$ (recall that $G$ is connected), the bound in Theorem \ref{th:batch} does also depend in a detailed way on the sizes $d_i$ of the underlying clustering $\DD$. For instance, if $d_i = n/k$ for all $i$ then (\ref{e:compound}) is sharper in the presence of informative side-information than the bound in Theorem \ref{th:batch}. This is because, up to log factors, the resulting bound is of order
\(
\frac{n\,k}{m}\Phi_{G}^R
\)
which is no larger than the bound in Theorem \ref{th:batch} since $k-1 \le \Phi_{G}^R \leq n-1$.  Thus in the case of maximally informative side-information ($\Phi_{G}^R = \Theta(k)$) and balanced cluster sizes the bound is improved by a factor of $\frac{k}{n}$.
%
On the other hand, \bnc\ is definitely much faster than the Matrix Winnow-based algorithm since, apart from the random spanning tree construction, it only takes $\scO((n+m)\log^* n)$ time to run if implemented via standard (union-find) data-structures, where
$\log^* n$ is the iterated logarithm of $n$.

We complement the two upper bounds with the following lower bound result, showing that the dependence of $\SCer{}{}$ on $\Phi_G^R$ (or $k$) cannot be eliminated.
\begin{theorem}\label{th:batchlb}
Given any side-information graph $G=(\VV,E)$, any $b \in [4, $n$-1]$, any $k > 2$ and any $m < \frac{n^2}{4}$, there exists a 
similarity matrix $Y$ representing a clustering formed by at most $k$ clusters such that for any algorithm giving in output clustering $\CC$ we have
$\SCer{\CC}{Y} = \Omega\left(\min\left\{\frac{n^2}{m}\,k, b\right\}\right)$ 
while $\Phi^R_G(Y) \leq b$.
\end{theorem}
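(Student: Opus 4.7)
The plan is to apply Yao's minimax principle: exhibit a hard distribution $\mu$ over similarity matrices $Y$, each satisfying both structural constraints (at most $k$ clusters and $\Phi^R_G(Y)\le b$), and show that any deterministic algorithm incurs expected error $\Omega(\min\{n^2k/m,\,b\})$ under $\mu$. Since any clustering with $\ell'$ clusters in a connected graph $G$ has $\Phi^R_G\ge \ell'-1$ (the contracted cluster graph is connected), set $\ell:=\min\{k,\,b+1\}$, which is the largest cluster count compatible with both budgets.

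First, I would build a \emph{base} clustering $\mathcal{D}^\star$ with exactly $\ell$ clusters and $\Phi^R_G(Y^\star)\le b$ by taking a spanning tree $T$ of $G$ and cutting $\ell-1$ suitably chosen edges of $T$, producing connected-in-$T$ components $B_1,\ldots,B_\ell$ of roughly balanced sizes $\Theta(n/\ell)$ via a standard balanced tree-decomposition (centroid) argument. The bound on $\Phi^R_G$ follows from $r_G(e)\le 1$ on each cut tree-edge together with the random-spanning-tree identity $\Phi^R_G(Y)=\mathbb{E}_{T^\star}[\Phi_{T^\star}(Y)]$, which lets us average the contribution of non-tree cut edges of $G$. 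I would then form a family $\mathcal{F}$ by choosing a \emph{swap set} $R\subseteq V$ of size $s=\Theta(\min\{n^2k/m,\,b\})$, placed at well-chosen (e.g.\ leaf) vertices of $T$, and, for each $v\in R$, independently and uniformly reassigning $v$ to one of the $\ell$ clusters. Each swap perturbs the cut only in a bounded neighborhood of $v$, so $\Phi^R_G(Y)\le b$ continues to hold (up to constants) for every $Y\in\mathcal{F}$.

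Second, I would lower-bound the expected misclassification by a Fano-type argument restricted to $R$. Under $Y\sim\mathcal{F}$ uniform, each $v\in R$ carries $\log_2\ell$ bits of independent label entropy. The samples specifically informative about $v$ are the $\mathrm{Binomial}(m,2/n)$ samples containing $v$, of which only an $O(1/\ell)$ fraction are intra-cluster, so the probability the learner sees any identifying positive sample for $v$ is $O(m/(n\ell))$. Consequently, the expected number of items in $R$ whose cluster is information-theoretically unrecoverable is $\Omega(|R|(1-Cm/(n\ell)))$; combined with the two regimes of $|R|$, this gives $\Omega(n^2k/m)$ when the first term dominates and $\Omega(b)$ when the second does, i.e.\ $\Omega(\min\{n^2k/m,\,b\})$ overall. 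Yao's principle then extracts a single fixed $Y$ realizing the claim.

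The principal obstacle is Step 1: simultaneously enforcing (a) enough swap items to drive the error bound, (b) $\Phi^R_G(Y)\le b$ on every $Y\in\mathcal{F}$, and (c) balanced cluster sizes under the tree cut. Guaranteeing (b) in the presence of non-tree edges of $G$ is the delicate point — each swap can cause several non-tree edges to become newly cut, and controlling their cumulative resistance-weighted weight is best handled by re-deriving $\Phi^R_G$ from its random-spanning-tree expectation and showing that $\mathbb{E}_{T^\star}[\Phi_{T^\star}]$ changes by only $O(1)$ per swap, so that the $|R|\le b$ swaps cost at most an additive $O(b)$ that we can absorb into the budget by choosing the constants in $\ell-1$ appropriately.
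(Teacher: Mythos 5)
Your high-level plan (a randomized labeling that the training set cannot disambiguate, plus an effective-resistance budget argument) is in the right spirit, but two steps fail as stated. The central gap is the information-theoretic one. You reassign each swapped vertex $v\in R$ to one of the $\ell$ \emph{large} clusters, each of size $\Theta(n/\ell)$, and argue the learner sees an identifying positive sample for $v$ with probability only $O(m/(n\ell))$. But the theorem must hold for all $m$ up to $n^2/4$; once $m\gg n\ell$ the learner sees, with probability $1-o(1)$, a positive pair $(v,w)$ with $w$ in $v$'s new cluster, and $v$ is fully recoverable, so your bound $\Omega(|R|(1-Cm/(n\ell)))$ becomes vacuous exactly in the regime where the target $\Omega(n^2k/m)$ is small but nonzero. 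Moreover, even when no positive sample for $v$ arrives, the $\approx 2m/n$ \emph{negative} samples involving $v$ are informative in your design: each one eliminates one of the $\ell$ candidate clusters, so the per-item entropy accounting is not valid. The paper's construction is engineered precisely to kill both leaks: the randomness lives on small sets $H_j$ of size $\lfloor n^2/2m\rfloor$ consisting of vertices \emph{isolated} in the induced training graph (so no intra-$H_j$ sample exists at all), and each $H_j$ is split between two fresh classes $2j-1,2j$ used nowhere else (so every observed pair touching $H_j$ is negative under \emph{both} hypotheses and carries zero information). You would need to redesign your swap targets to be small, sample-free, dedicated class pairs --- which is essentially the paper's construction.

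The second gap is your control of $\Phi^R_G$. Placing $R$ at leaves of a spanning tree $T$ and asserting that each swap changes the resistance-weighted cut by $O(1)$ is unjustified: a leaf of $T$ can have high degree in $G$, and $r(v)=\sum_{w:(v,w)\in E}r_G(v,w)$ can be as large as $\deg_G(v)$ (e.g.\ the center of a star has $r(v)=n-1$), so relabeling such a vertex can add far more than $O(1)$ to $\Phi^R_G$. The correct fix, used in the paper, is an averaging argument: since $\sum_{v}r(v)=2(n-1)$, the $\lfloor b/2\rfloor$ vertices with smallest $r(v)$ have total incident effective resistance at most $b$, and confining \emph{all} label randomness to that set (with everything outside it in a single class) guarantees $\Phi^R_G(Y)\le b$ deterministically, for every realization. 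Your base construction with $\ell-1$ tree cuts and balanced clusters is also unnecessary for the lower bound and complicates the cut-size accounting without helping the error bound.
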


\section{Conclusions and Ongoing Research}\label{s:conc}
We have investigated the problem of learning a clustering over a finite set from pairwise training data and side-information data. Two routes have been followed to tackle this problem: i. a direct route, where we exhibited a specific algorithm, called \bnc, operating without side information graph, and ii. an indirect route that steps through a reduction, called \rgc, establishing a tight bridge between two clustering metrics, that takes the side-information graph into account. 
We provided two misclassification error analyses in the case when the source of similarity data is consistent with a given clustering, and complemented these analyses with an involved construction delivering an almost matching lower bound.

Two extensions we are currently exploring are: 
i. extending the underlying statistical assumptions on data (e.g., sampling distribution-free guarantees) while retaining running time efficiency, and
ii. studying other learning regimes, like active learning, under similar or broader statistical assumptions as those currently in this paper.

\vspace{.2truecm}

{\bf Acknowledgements.}\ {\small
This work was supported in part by the U.S. Army Research Laboratory and the U.K. Defence Science and Technology Laboratory and was accomplished under Agreement Number W911NF-16-3-0001. The views and conclusions contained in this document are those of the authors and should not be interpreted as representing the official policies, ether expressed or implied, of the U.S. Army Research Laboratory, the U.S. Government, the U.K. Defence Science and Technology Laboratory or the U.K. Government. The U.S. and U.K. Governments are authorized to reproduce and distribute reprints for Government purposes notwithstanding any copyright notation herein.}

\bibliographystyle{plain}

\appendix

\newpage

\section{Proofs}\label{as:proofs}

\subsection{Proof of Theorem \ref{t:main2}}
The following two lemmas are an immediate consequence of the triangle inequality for $\jdist$.
\begin{lemma}\label{SClem1}
Let $a,b \in [0,1]$ be such that $a+b \geq 3/2$, and sets $U, W, X, Y, Z$ satisfy
\begin{enumerate} 
\item \label{SCIL3} $\jdist(U,W)\leq 1-a$;
\item \label{SCIL4} $\jdist(W,X)\leq 1-b$;
\item \label{SCIL5} $\jdist(U,Y)\leq 1-a$;
\item \label{SCIL6} $\jdist(Z,X) =1$.
\end{enumerate}
Then $\jdist(Y,Z)\geq 1-b$.
\end{lemma}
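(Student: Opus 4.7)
The plan is to apply the triangle inequality for $\jdist$ twice, walking from $Z$ through $X$, $U$, and into $Y$, and then to invoke the reverse triangle inequality once to flip the final bound into a lower bound on $\jdist(Y,Z)$.

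First I would chain hypotheses (\ref{SCIL3}) and (\ref{SCIL4}) to bound
\[
\jdist(U,X) \leq \jdist(U,W) + \jdist(W,X) \leq (1-a) + (1-b) = 2 - a - b.
\]
Next, combining this with hypothesis (\ref{SCIL5}) by another triangle inequality gives
\[
\jdist(X,Y) \leq \jdist(X,U) + \jdist(U,Y) \leq (2 - a - b) + (1 - a) = 3 - 2a - b.
\]

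Finally, the reverse triangle inequality together with hypothesis (\ref{SCIL6}) yields
\[
\jdist(Y,Z) \geq \jdist(Z,X) - \jdist(X,Y) \geq 1 - (3 - 2a - b) = 2a + b - 2.
\]
The assumption $a+b \geq 3/2$ is exactly $2a + 2b \geq 3$, i.e.\ $2a + b - 2 \geq 1 - b$, from which the conclusion $\jdist(Y,Z) \geq 1 - b$ follows.

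I do not expect a genuine obstacle here; the argument is mechanical once one fixes the right route through the five sets. The only subtle choice is that the path passes through $U$ twice in the sense that condition (\ref{SCIL5}) on $a$ is consumed once and condition (\ref{SCIL3}) on $a$ is consumed once, while condition (\ref{SCIL4}) on $b$ is consumed only once; this asymmetry is precisely what produces the threshold $a+b \geq 3/2$ rather than a symmetric condition like $a+b \geq 2$, and it is what one must notice in order to see why the stated hypothesis is exactly tight for the Jaccard-metric triangle inequalities used.
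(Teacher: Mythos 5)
Your proof is correct and is essentially the same as the paper's: the paper chains the triangle inequality in one line, $1=\jdist(Z,X)\leq \jdist(Z,Y)+\jdist(Y,U)+\jdist(U,W)+\jdist(W,X)$, which is exactly your two triangle steps plus the reverse-triangle step collapsed together, and both arrive at $\jdist(Y,Z)\geq 2a+b-2\geq 1-b$. No further comment is needed.
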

\begin{proof}
We can write
\begin{align*}
1 
&= \jdist(Z,X)\\ 
&\leq \jdist(Z,Y) + \jdist(Y,U) + \jdist(U,W) + \jdist(W,X)\\
&\leq  \jdist(Z,Y) + 1-a + 1-a + 1-b~,
\end{align*}
so that 
\[
\jdist(Z,Y) \geq 2a+b-2 \geq 1-b,
\]
the last inequality using the assumption $a+b \geq 3/2$. This concludes the proof.
\end{proof}
\begin{lemma}\label{SClem3}
Let $a,b \in [0,1]$ be such that $2b \geq 1+a$, and sets $X, Y, Z$ satisfy
\begin{enumerate}
\item \label{SCLI7} $\jdist(X,Y) \leq 1-b$:
\item \label{SCLI8} $\jdist(Y,Z) \geq 1-a$~.
\end{enumerate}
Then $\jdist(X,Z) \geq 1-b$.
\end{lemma}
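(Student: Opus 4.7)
The plan is to prove this directly using the triangle inequality, since $\jdist$ is stated to be a proper metric on finite sets. The hypotheses give an upper bound on $\jdist(X,Y)$ and a lower bound on $\jdist(Y,Z)$, so the natural move is to use the reverse triangle inequality to transfer these into a lower bound on $\jdist(X,Z)$.

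First I would write the triangle inequality in the form
\[
\jdist(Y,Z) \;\leq\; \jdist(Y,X) + \jdist(X,Z),
\]
rearrange to obtain $\jdist(X,Z) \geq \jdist(Y,Z) - \jdist(X,Y)$, and then substitute the bounds from items~\ref{SCLI7} and~\ref{SCLI8} to get $\jdist(X,Z) \geq (1-a) - (1-b) = b - a$.

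The final step is to invoke the hypothesis $2b \geq 1+a$, which is algebraically equivalent to $b - a \geq 1 - b$, and conclude $\jdist(X,Z) \geq 1 - b$, as required. There is no real obstacle here — the only thing to check is that the scalar inequality $2b \geq 1+a \iff b-a \geq 1-b$ is used correctly, which is immediate. The structure of the argument mirrors that of Lemma~\ref{SClem1}: all that is needed is metricity of $\jdist$ plus one elementary inequality manipulation.
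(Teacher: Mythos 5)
Your proposal is correct and coincides with the paper's own argument: both apply the triangle inequality in the form $\jdist(Y,Z) \leq \jdist(Y,X) + \jdist(X,Z)$, deduce $\jdist(X,Z) \geq b-a$, and then use $2b \geq 1+a$ to conclude $b-a \geq 1-b$. No gaps.
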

\begin{proof}
We can write
\[
1-a \leq \jdist(Y,Z) \leq \jdist(Y,X) + \jdist(X,Z) \leq 1-b + \jdist(X,Z),
\]
so that
\[
\jdist(X,Z) \geq b-a \geq 1-b,
\] 
the last inequality deriving from $2b \geq 1+a$. This concludes the proof.
\end{proof}
With these two simple lemmas handy, we are now ready to analyze \rgc. The reader is compelled to refer to Algorithm \ref {SCalgo4} for notation. In what follows, $a \in [0,1]$ is \rgc's distance parameter, and $b \in [0,1]$ is a constant such that the two conditions on $a$ and $b$ required by Lemmas \ref{SClem1} and \ref{SClem3} simultaneously hold. It is easy to see that these conditions are equivalent to\footnote
{
For instance, we may set $a = 2/3$ and $b = 5/6$.
} 
\begin{equation}\label{e:abcond}
a \geq \frac{2}{3},\qquad b \geq \frac{1+a}{2}\,.
\end{equation}

The following definition will be useful.
\begin{definition}
A $b$-\textit{anomaly} in the similarity graph $(\VV,\PP)$ is a vertex $v\in\VV$ for which 
$\jdist(\SCtclu{\SClab{v}},\SCgam{v}) \geq 1-b$, for some constant $b \in [0,1]$ satisfying (\ref{e:abcond}). We denote by $\SCano$ the set of all anomalies. A \textit{centered round} of \rgc\ is any $t\leq \SCnum$ in which $\SCnei{t}{\alpha_t}\not\subseteq\SCano$. We denote by $\SCcen$ the set of all centered rounds. A \textit{centered label} is any class $i\in \{1,\ldots,k\}$ such that $\SCtclu{i}\not\subseteq\SCano$. We denote by $\SCus$ the set of all centered labels.
\end{definition}
\begin{lemma}\label{SClem2}
For any round $t \leq \SCnum$, there exists a class $j\in \{1,\ldots, k\}$ such that for every vertex $v\in\SCnei{t}{\alpha_t}\setminus\SCano$ we have $\SClab{v}=j$.
\end{lemma}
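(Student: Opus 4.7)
The plan is to fix an arbitrary non-anomalous vertex in $\SCnei{t}{\alpha_t}\setminus\SCano$, use its label as the candidate $j$, and then show via Lemma~\ref{SClem1} that any other non-anomalous vertex in $\SCnei{t}{\alpha_t}$ must share that label. If $\SCnei{t}{\alpha_t}\setminus\SCano$ is empty, the claim is vacuous and any $j$ works, so I would assume it contains some vertex $v_0$ and set $j := \SClab{v_0}$.

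Now pick any $w \in \SCnei{t}{\alpha_t}\setminus\SCano$ with $w \neq v_0$, and suppose for contradiction that $\SClab{w} \neq j = \SClab{v_0}$. The key observation is that since both $v_0$ and $w$ lie in $\SCnei{t}{\alpha_t}$, the definition of $\SCnei{t}{\cdot}$ and of the edge set $\SCfi$ (together with $a$ being \rgc's distance parameter) gives
\[
\jdist(\SCgam{\alpha_t},\SCgam{v_0}) \leq 1-a, \qquad \jdist(\SCgam{\alpha_t},\SCgam{w}) \leq 1-a,
\]
where the edge cases $v_0=\alpha_t$ or $w=\alpha_t$ are handled trivially since then one of these Jaccard distances is $0$. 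Furthermore, since $\SClab{v_0}\neq\SClab{w}$, the clusters $\SCtclu{\SClab{v_0}}$ and $\SCtclu{\SClab{w}}$ are disjoint subsets of $\VV$, so
\[
\jdist\bigl(\SCtclu{\SClab{w}},\SCtclu{\SClab{v_0}}\bigr) = 1.
\]
Finally, $v_0 \notin \SCano$ yields $\jdist(\SCgam{v_0},\SCtclu{\SClab{v_0}}) < 1-b$.

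With these inputs I would apply Lemma~\ref{SClem1} under the substitution $U := \SCgam{\alpha_t}$, $W := \SCgam{v_0}$, $X := \SCtclu{\SClab{v_0}}$, $Y := \SCgam{w}$, $Z := \SCtclu{\SClab{w}}$; the four hypotheses of the lemma are exactly the four Jaccard inequalities just listed, and (\ref{e:abcond}) guarantees $a+b\geq 3/2$ as required. The conclusion of the lemma then yields
\[
\jdist\bigl(\SCgam{w},\SCtclu{\SClab{w}}\bigr) \geq 1-b,
\]
which says $w \in \SCano$, contradicting the choice $w \in \SCnei{t}{\alpha_t}\setminus\SCano$.

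The argument is essentially a single application of Lemma~\ref{SClem1}, so there is no real obstacle beyond getting the role assignment right: the subtle point is picking which vertex plays $W$ (the one whose non-anomaly is consumed as the hypothesis $\jdist(W,X)\leq 1-b$) versus $Y$ (the one whose non-anomaly will be contradicted by the conclusion). The fact that $\alpha_t$ plays the ``hub'' role $U$ is natural because both candidate vertices are connected to it in $(\VV,\SCfi)$, and this lets the same argument work without case-splitting on whether $\alpha_t$ itself is anomalous or not.
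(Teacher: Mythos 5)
Your proof is correct and follows essentially the same route as the paper's: the same contradiction setup and the same application of Lemma~\ref{SClem1} with $U:=\SCgam{\alpha_t}$, $W:=\SCgam{v_0}$, $X:=\SCtclu{\SClab{v_0}}$, $Y:=\SCgam{w}$, $Z:=\SCtclu{\SClab{w}}$ (you even correctly write $Z$ as $\SCtclu{\SClab{w}}$, where the paper's text has an evident typo setting $Z:=\SCtclu{\SClab{v}}$ despite then claiming $\jdist(Z,X)=1$). Your explicit handling of the vacuous case and of $v_0=\alpha_t$ or $w=\alpha_t$ is a minor tidying, not a different argument.
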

\begin{proof}
Suppose, for the sake of contradiction, that we have $v,w\in \SCnei{t}{\alpha_t}$ with $v, w\notin\SCano$ and $\SClab{v}\neq\SClab{w}$. Define $U:=\SCgam{\alpha_t}$, $W:=\SCgam{v}$, $X:=\SCtclu{\SClab{v}}$, $Y:=\SCgam{w}$, and $Z:=\SCtclu{\SClab{v}}$.
Since $v,w\in\SCnei{t}{\alpha_t}$, by the way graph $(\VV,\QQ)$ is constructed, we have both $\jdist(U,W)\leq 1-a$ and $\jdist(U,Y)\leq 1-a$. Moreover, since $v\notin\SCano$ we have $\jdist(W,X) < 1-b$. Also, $\SClab{v}\neq\SClab{w}$ implies $\jdist(Z,X) = 1$. We are therefore in a position to apply Lemma \ref{SClem1} verbatim, from which we have $\jdist(Y,Z)\geq 1-b$, i.e., $w\in\SCano$. This is a contradiction, which implies the claimed result.
\end{proof}
Lemma \ref{SClem2} allows us to make the following definition.

\begin{definition}
Given a centered round $t\in\SCcen$, we define $\gamma(t)$ to be the unique class $j$ such that for every vertex $v\in\SCnei{t}{\alpha_t}\setminus\SCano$ we have $\SClab{v}=j$.
\end{definition} 
\begin{lemma}\label{SClem4}
For any round $t \leq \ell$ and vertices $v,w\in A_t$ with $v\notin\SCano$, $w\notin\SCnei{t}{v}$ and $\SClab{v}=\SClab{w}$ we have $w\in\SCano$.
\end{lemma}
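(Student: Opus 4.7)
The plan is to reduce the statement to a direct application of Lemma \ref{SClem3}. The natural assignment is $X := \SCtclu{\SClab{w}}$, $Y := \SCgam{v}$, and $Z := \SCgam{w}$; because $\SClab{v} = \SClab{w}$, the set $X$ is simultaneously $\SCtclu{\SClab{v}}$, which is the key observation that lets the hypothesis on $v$ speak about the cluster of $w$.

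First I would check the two hypotheses of Lemma \ref{SClem3}. Since $v \notin \SCano$, the definition of a $b$-anomaly immediately gives $\jdist(\SCtclu{\SClab{v}}, \SCgam{v}) < 1-b$, i.e., $\jdist(X,Y) \leq 1-b$. Next, because $v, w \in \SCtaken{t}$ and $w \notin \SCnei{t}{v}$, and since $v \in \SCnei{t}{v}$ trivially, we must have $w \neq v$ and $(v,w) \notin \SCfi$. By the first-stage construction of the graph $(\VV, \SCfi)$ in \rgc, this forces $\jdist(\SCgam{v}, \SCgam{w}) > 1-a$, and in particular $\jdist(Y,Z) \geq 1-a$.

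With both hypotheses in place, Lemma \ref{SClem3} applies because (\ref{e:abcond}) explicitly provides the required conditions $a \geq 2/3$ and $2b \geq 1+a$. The conclusion is $\jdist(X,Z) \geq 1-b$, namely $\jdist(\SCtclu{\SClab{w}}, \SCgam{w}) \geq 1-b$, which by definition says $w \in \SCano$.

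I do not anticipate a real obstacle here. The entire argument is a bookkeeping exercise that pattern-matches the assumptions on $v$ and $w$ into the abstract triangle-inequality statement of Lemma \ref{SClem3}; the only subtlety worth underlining is the identification $\SCtclu{\SClab{w}} = \SCtclu{\SClab{v}}$, which is what lets the anomaly condition on $v$ be used to control the set associated to $w$'s cluster.
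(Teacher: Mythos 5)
Your proof is correct and follows exactly the paper's argument: set $X=\SCtclu{\SClab{v}}=\SCtclu{\SClab{w}}$, $Y=\SCgam{v}$, $Z=\SCgam{w}$, verify the two hypotheses of Lemma~\ref{SClem3} from the non-anomaly of $v$ and from $w\notin\SCnei{t}{v}$, and conclude $w\in\SCano$. No issues.
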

\begin{proof}
Define $X:=\SCtclu{\SClab{v}}$, $Y:=\SCgam{v}$ and $Z:=\SCgam{w}$. Since $v\notin{\SCano}$ we have $\jdist(X,Y) \leq 1-b$. Moreover, $w\notin\SCnei{t}{v}$ implies $\jdist(Y,Z) \geq 1-a$. By Lemma \ref{SClem3} we immediately have $\jdist(X,Z) \geq 1-b$. But since $X=\SCtclu{\SClab{v}}=\SCtclu{\SClab{w}}$, this equivalently establishes that $w\in\SCano$.
\end{proof}
\begin{lemma}\label{SClem5}
For any centered round $t\in\SCcen$, any vertex $v\in\SCnei{t}{\alpha_t}\setminus\SCano$, and any vertex $w\in\SCnei{t}{\alpha_t}\setminus\SCnei{t}{v}$, we have $w\in\SCano$.
\end{lemma}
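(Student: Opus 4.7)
The plan is to argue by contradiction, combining Lemma \ref{SClem2} (which forces label agreement among non-anomalous neighbours of $\alpha_t$) with Lemma \ref{SClem4} (which forces anomaly status on far-away vertices sharing a label with a non-anomalous vertex).

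First I would suppose, for the sake of contradiction, that $w \notin \SCano$. Since $t \in \SCcen$ is a centered round and both $v, w \in \SCnei{t}{\alpha_t}$ with $v, w \notin \SCano$, Lemma \ref{SClem2} applies to this round and yields a common class $j = \SCc{t}$ with $\SClab{v} = j = \SClab{w}$.

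Next, I would verify the hypotheses of Lemma \ref{SClem4} for the pair $(v,w)$. Both $v$ and $w$ lie in $\SCtaken{t}$, because $\SCnei{t}{\alpha_t} \subseteq \SCtaken{t}$ by the definition of $\SCnei{t}{\cdot}$ in the algorithm. By assumption, $v \notin \SCano$ and $w \notin \SCnei{t}{v}$, and we have just established $\SClab{v} = \SClab{w}$. Lemma \ref{SClem4} then forces $w \in \SCano$, contradicting our supposition. Hence $w \in \SCano$, as required.

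There is essentially no main obstacle: the lemma is a clean bookkeeping consequence of the two preceding lemmas, and the whole argument is a one-step contradiction that chains Lemma \ref{SClem2} into Lemma \ref{SClem4}. The only care needed is to confirm membership of $v$ and $w$ in $\SCtaken{t}$ so that Lemma \ref{SClem4} is applicable, which is immediate from the definition of $\SCnei{t}{\alpha_t}$.
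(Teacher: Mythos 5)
Your proof is correct and uses the same two ingredients as the paper (Lemma \ref{SClem2} for label agreement and Lemma \ref{SClem4} for the same-label case); the paper merely organizes it as a case split on whether $\SClab{w}=\SClab{v}$ rather than a single contradiction, which is an immaterial difference.
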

\begin{proof}
If $\SClab{w}\neq\SClab{v}$ then by Lemma \ref{SClem2} we must have $w\in\SCano$, so we are done. On the other hand, if $\SClab{w}=\SClab{v}$, we have $v\notin\SCano$, $w\notin\SCnei{t}{v}$ and $\SClab{v}=\SClab{w}$ which implies, by Lemma \ref{SClem4}, that $w\in\SCano$.
\end{proof}
\begin{lemma}
\label{SClem8}
For any centered round $t\in\SCcen$, we have 
$|(\SCtaken{t+1}\cap\SCtclu{\gamma(t)})\setminus\SCano| \leq |\SCclu{t}\cap\SCano|$.
\end{lemma}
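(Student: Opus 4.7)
The plan is to compare, via the greedy maximality of $\alpha_t$, the size of $\SCnei{t}{\alpha_t}$ against the size of the non-anomaly part of the true cluster $\SCtclu{\gamma(t)}$ that is still unassigned at round $t$. The key input will be Lemma~\ref{SClem4}, which I would use to show that the non-anomaly vertices of $\SCtaken{t}$ with true label $\gamma(t)$ behave like a clique in the graph $(\VV,\SCfi)$ restricted to $\SCtaken{t}$.

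First I would set up bookkeeping. Let $T_n := (\SCtaken{t}\cap\SCtclu{\gamma(t)})\setminus\SCano$ denote the non-anomaly vertices in $\SCtaken{t}$ carrying true label $\gamma(t)$, and write $B := \SCnei{t}{\alpha_t}\setminus\SCano$ for the non-anomaly part of the current cluster $\SCclu{t}=\SCnei{t}{\alpha_t}$. Since $\SCtaken{t+1} = \SCtaken{t}\setminus\SCclu{t}$ and, by Lemma~\ref{SClem2}, every $v\in B$ has $\SClab{v}=\gamma(t)$ so that $B\subseteq T_n$, one obtains the clean partition $T_n = B \,\sqcup\, \bigl((\SCtaken{t+1}\cap\SCtclu{\gamma(t)})\setminus\SCano\bigr)$. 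Thus the right-hand $|B|$ and the left-hand term in the target inequality together reconstruct $|T_n|$.

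Next, the main conceptual step: I would argue $T_n\subseteq\SCnei{t}{v}$ for any $v\in B$. Because $t\in\SCcen$, the set $B$ is nonempty; fix any $v\in B\subseteq T_n$. For any $w\in T_n$, we have $v,w\in\SCtaken{t}$, $v\notin\SCano$, and $\SClab{v}=\gamma(t)=\SClab{w}$. Lemma~\ref{SClem4} applied to this pair forces $w\in\SCnei{t}{v}$, since otherwise its conclusion would be $w\in\SCano$, contradicting $w\in T_n$. Hence $|\SCnei{t}{v}|\geq |T_n|$.

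Finally I would invoke the greedy rule. Because $\alpha_t$ maximizes $|\SCnei{t}{\cdot}|$ over $\SCtaken{t}$ and $v\in\SCtaken{t}$, we get $|\SCnei{t}{\alpha_t}|\geq |\SCnei{t}{v}|\geq |T_n|$. Combining with $|\SCnei{t}{\alpha_t}| = |B| + |\SCclu{t}\cap\SCano|$ and $|T_n|=|B|+|(\SCtaken{t+1}\cap\SCtclu{\gamma(t)})\setminus\SCano|$ and subtracting $|B|$ from both sides yields the claim. The only real obstacle is the clique-like step, where one must realize that Lemma~\ref{SClem4} can be applied with the reference non-anomaly $v$ taken not to be $\alpha_t$ but any element of $B$; once this observation is made, the rest is bookkeeping and greedy maximality.
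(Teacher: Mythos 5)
Your proof is correct and rests on the same two pillars as the paper's: the greedy maximality $|\SCnei{t}{\alpha_t}|\geq|\SCnei{t}{v}|$ for a non-anomalous $v\in\SCnei{t}{\alpha_t}$ (guaranteed to exist since $t$ is centered), and Lemma~\ref{SClem4} applied with that $v$ as reference to capture the leftover same-label non-anomalies. The only difference is bookkeeping: you cancel $|B|$ from a direct count of $T_n$ versus $\SCnei{t}{\alpha_t}$, whereas the paper compares the set differences $\SCnei{t}{v}\setminus\SCnei{t}{\alpha_t}$ and $\SCnei{t}{\alpha_t}\setminus\SCnei{t}{v}$ via Lemma~\ref{SClem5}; both are valid.
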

\begin{proof}
Since $t\in\SCcen$ there must exist a vertex $v\in\SCnei{t}{\alpha_t}$ with $v\notin\SCano$, so let us consider such a $v$. Note that by the way the algorithm works,
we have $|\SCnei{t}{\alpha_t}|\geq|\SCnei{t}{v}|$, so that
$|\SCnei{t}{v}\setminus\SCnei{t}{\alpha_t}|\leq|\SCnei{t}{\alpha_t}\setminus\SCnei{t}{v}|$. 
Next, by Lemma \ref{SClem5} we have $\SCnei{t}{\alpha_t}\setminus\SCnei{t}{v}\subseteq\SCano$,
hence 
$\SCnei{t}{\alpha_t}\setminus\SCnei{t}{v} \subseteq \SCnei{t}{\alpha_t}\cap\SCano$ and, consequently, $|\SCnei{t}{\alpha_t}\setminus\SCnei{t}{v}|\leq |\SCnei{t}{\alpha_t}\cap\SCano|$. Recalling that $\SCclu{t}=\SCnei{t}{\alpha_t}$, we have therefore obtained
\begin{equation}\label{e:second}
|\SCnei{t}{v}\setminus\SCnei{t}{\alpha_t}|\leq|\SCnei{t}{\alpha_t}\setminus\SCnei{t}{v}|
\leq 
|\SCnei{t}{\alpha_t}\cap\SCano|=|\SCclu{t}\cap\SCano|\,.
\end{equation}
Now suppose we have some vertex $w\in(\SCtaken{t+1}\cap\SCtclu{\gamma(t)})\setminus\SCano$.
For the sake of contradiction, let us assume that $w\notin\SCnei{t}{v}$. Then 
$w\in\SCtaken{t+1}$ implies $w\in\SCtaken{t}$ which, combined with Lemma \ref{SClem4} together with the fact that $\SClab{v}=\SCc{t}=\SClab{w}$, implies that $w\in\SCano$, which is a contradiction. Hence we must have $w\in\SCnei{t}{v}$. Moreover, since $w\in\SCtaken{t+1}$ we must have $w\notin\SCnei{t}{\alpha_t}$. 
We have hence shown that $w\in\SCnei{t}{v}\setminus\SCnei{t}{\alpha_t}$, implying that 
$|(\SCtaken{t+1}\cap\SCtclu{\gamma(t)})\setminus\SCano| \leq |\SCnei{t}{v}\setminus\SCnei{t}{\alpha_t}|$. 
Combining with (\ref{e:second}) concludes the proof.
\end{proof}
We now turn to considering centered labels.
\begin{lemma}\label{SClem6}
For any centered label $i\in\SCus$ there exists some round $t\leq\SCnum$ such that $\SCc{t}=i$.
\end{lemma}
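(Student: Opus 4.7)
The plan is to exhibit an explicit witness round for each centered label by tracking what happens to a particular non-anomalous vertex in that class. Fix a centered label $i \in \Delta_b$. By the very definition of $\Delta_b$ we have $D_i \not\subseteq \Lambda_b$, so we may pick some $v \in D_i \setminus \Lambda_b$, i.e.\ a vertex in the true class $i$ that is not a $b$-anomaly.

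The next step is to note that every vertex of $V$ is assigned to exactly one output cluster $C_t$. This is immediate from the structure of \rgc's second stage: we have $A_1 = V$, $A_{t+1} = A_t \setminus C_t$, and $\alpha_t \in C_t$ with $\alpha_t \in A_t$, so the sequence $|A_t|$ strictly decreases. The while loop therefore terminates after $\ell$ rounds with $A_{\ell+1} = \emptyset$, whence $V = \bigcup_{t=1}^{\ell} C_t$ and the $C_t$'s are disjoint. In particular, there is a unique $t^\star \leq \ell$ with $v \in C_{t^\star}$.

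At that round we have $v \in C_{t^\star} = N_{t^\star}(\alpha_{t^\star})$ and $v \notin \Lambda_b$, so $N_{t^\star}(\alpha_{t^\star}) \not\subseteq \Lambda_b$, which is precisely the definition of $t^\star \in \Omega_b$ being a centered round. Lemma \ref{SClem2} then applies at round $t^\star$ and tells us that all vertices in $N_{t^\star}(\alpha_{t^\star}) \setminus \Lambda_b$ share a common class, which is $\gamma(t^\star)$ by the definition of the map $\gamma$. Since $v$ is one of these vertices and $\mu_{\mathcal D}(v) = i$, we conclude $\gamma(t^\star) = i$, as required.

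There is no real obstacle here: the argument is essentially a bookkeeping exercise that combines the definition of a centered label with the fact that the second stage of \rgc\ covers every vertex. The only subtle point is ensuring that the round containing $v$ is indeed centered, but this is automatic because $v$ itself witnesses a non-anomaly in $N_{t^\star}(\alpha_{t^\star})$. Once that is in place, Lemma \ref{SClem2} instantly identifies the round's label.
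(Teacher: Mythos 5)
Your proof is correct and follows essentially the same route as the paper's: pick a non-anomalous vertex $v\in\SCtclu{i}\setminus\SCano$, locate the round $t$ with $v\in\SCclu{t}$ using the fact that the output clusters partition $\SCver$, observe that $t$ is centered because $v$ witnesses it, and apply Lemma \ref{SClem2} to get $\SCc{t}=\SClab{v}=i$. The only difference is that you spell out why the $\SCclu{t}$'s partition $\SCver$, which the paper takes for granted.
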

\begin{proof}
Since $i$ is a centred label, pick $v\in\SCtclu{i}\setminus\SCano$, Further, since $\SCclu{1}, \SCclu{2}, ..., \SCclu{\SCnum}$ is a partition of $\SCver$, choose $t$ such that $v\in\SCclu{t}$. Now, since $v\in\SCclu{t}\setminus{\SCano}$ we have that $t\in\SCcen$ and, by Lemma \ref{SClem2}, that $\SCc{t}=\SClab{v}=i$.
\end{proof}
Lemma \ref{SClem6} allows us to make the following definition.
\begin{definition}
Given a centered label $i\in\SCus$, we define $\SCmat{i}:=\min\{t\,:\,\SCc{t}=i\}$.
\end{definition}
\begin{lemma}\label{SClem7}
For any centered label $i\in\SCus$, we have $\SCtclu{i}\setminus\SCano\subseteq\SCtaken{\SCmat{i}}$.
\end{lemma}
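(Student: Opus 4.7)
The plan is to prove Lemma \ref{SClem7} by contradiction, leveraging Lemma \ref{SClem2} together with the minimality built into the definition of $\SCmat{i}$.

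First, I would fix a centered label $i \in \SCus$ and suppose, for contradiction, that there exists $v \in \SCtclu{i}\setminus\SCano$ with $v \notin \SCtaken{\SCmat{i}}$. Since $\SCtaken{1} = \VV$ and $\SCtaken{t+1} = \SCtaken{t}\setminus\SCclu{t}$, the sets $\SCtaken{t}$ are nonincreasing in $t$, so $v$ must have been removed at some earlier round: there is a unique $s < \SCmat{i}$ with $v \in \SCclu{s} = \SCnei{s}{\alpha_s}$.

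Next, I would argue that this round $s$ must be centered. Indeed, $v \in \SCnei{s}{\alpha_s}$ and $v \notin \SCano$ together show that $\SCnei{s}{\alpha_s} \not\subseteq \SCano$, so $s \in \SCcen$ and $\SCc{s}$ is defined. Applying Lemma \ref{SClem2} to round $s$ with the vertex $v \in \SCnei{s}{\alpha_s}\setminus \SCano$ yields $\SClab{v} = \SCc{s}$. But $v \in \SCtclu{i}$ means $\SClab{v} = i$, so $\SCc{s} = i$.

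This contradicts the minimality in the definition $\SCmat{i} = \min\{t : \SCc{t} = i\}$, since $s < \SCmat{i}$ while $\SCc{s} = i$. Hence every $v \in \SCtclu{i}\setminus\SCano$ must satisfy $v \in \SCtaken{\SCmat{i}}$, proving the claimed inclusion. I do not anticipate a real obstacle here: the argument is essentially a one-line invocation of Lemma \ref{SClem2} combined with the definition of $\SCmat{i}$; the only care needed is verifying that $s$ is indeed a centered round so that $\gamma(s)$ is well-defined before quoting Lemma \ref{SClem2}.
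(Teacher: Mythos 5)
Your proof is correct and follows essentially the same route as the paper's: assume a counterexample $v$, locate the earlier round $s<\SCmat{i}$ at which $v$ was clustered, note that $v\notin\SCano$ makes $s$ a centered round so Lemma~\ref{SClem2} gives $\SCc{s}=\SClab{v}=i$, contradicting the minimality of $\SCmat{i}$. No issues.
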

\begin{proof}
Suppose, for contradiction, that there exists some $v\in\SCtclu{i}\setminus\SCano$ with $v\notin\SCtaken{\SCmat{i}}$. Then, by definition of $\SCtaken{\SCmat{i}}$ there exists some round $t^o<\SCmat{i}$ with $v\in\SCclu{t^o}$. As $v\notin\SCano$ we have $t^o\in\SCcen$ and, by Lemma \ref{SClem2}, that $\SClab{v}=\SCc{t^o}$. Hence $\SCc{t^o}=\SClab{v}=i$ which, due to the condition $t^o<\SCmat{i}$, contradicts the fact that $\SCmat{i}:=\min\{t:\SCc{t}=i\}$.
\end{proof}
\begin{lemma}
\label{SClem9}
For any centred label $i\in\SCus$ we have $|\SCtclu{i}\setminus\SCclu{\SCmat{i}}|\leq|\SCtclu{i}\cap\SCano|+|\SCclu{\SCmat{i}}\cap\SCano|$.
\end{lemma}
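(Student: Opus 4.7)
The plan is to decompose $D_i \setminus C_{\psi(i)}$ according to whether vertices are anomalous or not, and then invoke Lemmas \ref{SClem7} and \ref{SClem8} on the non-anomalous part. The anomalous part is trivially bounded by $|D_i \cap \Lambda_b|$, so the whole content is in the non-anomalous side.

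First, I would write
\[
D_i \setminus \SCclu{\SCmat{i}} \;=\; \bigl((D_i \setminus \SCclu{\SCmat{i}}) \cap \SCano\bigr) \,\cup\, \bigl((D_i \setminus \SCclu{\SCmat{i}}) \setminus \SCano\bigr)\,,
\]
a disjoint union, so that $|D_i \setminus \SCclu{\SCmat{i}}|$ equals the sum of the two cardinalities. The first term is $\leq |D_i \cap \SCano|$ immediately.

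For the non-anomalous part, I would use Lemma \ref{SClem7}, which tells me that any $v \in D_i \setminus \SCano$ is still ``available'' at round $\SCmat{i}$, i.e., $v \in \SCtaken{\SCmat{i}}$. If such a $v$ also fails to lie in $\SCclu{\SCmat{i}}$, then by definition $v \in \SCtaken{\SCmat{i}} \setminus \SCclu{\SCmat{i}} = \SCtaken{\SCmat{i}+1}$. Hence
\[
(D_i \setminus \SCclu{\SCmat{i}}) \setminus \SCano \;\subseteq\; (\SCtaken{\SCmat{i}+1} \cap D_i) \setminus \SCano\,.
\]
Now round $\SCmat{i}$ is centered with $\gamma(\SCmat{i}) = i$ by definition of $\SCmat{i}$, so Lemma \ref{SClem8} applies and gives $|(\SCtaken{\SCmat{i}+1} \cap D_i) \setminus \SCano| \leq |\SCclu{\SCmat{i}} \cap \SCano|$. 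Combining the two bounds yields the claim.

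There is essentially no obstacle: the lemma is a bookkeeping consequence of the two previously established structural facts. The only care needed is to make sure the indices line up — that $\SCmat{i}$ is genuinely centered (which is immediate from the definition $\SCmat{i} = \min\{t:\gamma(t)=i\}$ and the fact that $\gamma$ is only defined on centered rounds), and that the set inclusion chain $D_i \setminus \SCano \subseteq \SCtaken{\SCmat{i}}$ is used in the right direction to convert a statement about $\SCtaken{\SCmat{i}+1}$ into one about $D_i \setminus \SCclu{\SCmat{i}}$.
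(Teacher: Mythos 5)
Your proof is correct and follows essentially the same route as the paper's: the same split of $D_i\setminus C_{\psi(i)}$ into anomalous and non-anomalous vertices, with Lemma \ref{SClem7} placing the non-anomalous ones in $A_{\psi(i)}$ (hence in $A_{\psi(i)+1}$) and Lemma \ref{SClem8} bounding that set via $\gamma(\psi(i))=i$. The only difference is cosmetic — you phrase it as a disjoint set decomposition rather than a per-vertex case analysis.
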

\begin{proof}
Suppose we have some $v\in\SCtclu{i}\setminus\SCclu{\SCmat{i}}$, and let us separate the two cases: (i) $v\notin\SCano$ and, (ii) $v\in\SCano$.

Case (i). Since $v\in\SCtclu{i}\setminus\SCano$ we have, by Lemma \ref{SClem7}, that $v\in\SCtaken{\SCmat{i}}$. Since $v\notin\SCclu{\SCmat{i}}$ this implies that $v\in\SCtaken{\SCmat{i}+1}$. Notice that $\SCc{\SCmat{i}}=i$ so $\SCtclu{i}=\SCtclu{\SCc{\SCmat{i}}}$ and hence $v\in(\SCtaken{\SCmat{i}+1}\cap\SCtclu{\SCc{\SCmat{i}}})\setminus\SCano$. By Lemma \ref{SClem8} the number of such vertices $v$ is hence upper bounded by 
$|\SCclu{\SCmat{i}}\cap\SCano|$.

Case (ii). In this case, we simply have that
$v\in\SCtclu{i}\cap\SCano$, so the number of such vertices $v$ is upper bounded by 
$|\SCtclu{i}\cap\SCano|$.

Putting the two cases together gives us 
$|\SCtclu{i}\setminus\SCclu{\SCmat{i}}|
\leq
|\SCtclu{i}\cap\SCano|+ |\SCclu{\SCmat{i}}\cap\SCano|$, 
as required.
\end{proof}
Having established the main building blocks of the behavior of \rgc, we now turn to quantifying the resulting connection between ER and HA. 
To this effect, we start off by defining a natural map $\SCinj$ associated with the clustering $\{C_1,\ldots,C_{\ell}\}$ generated by \rgc, along with a corresponding accuracy measure.
\begin{definition}\label{d:clustermap}
The map $\SCinj\,:\, \{D_1, \ldots, D_k\} \rightarrow \{C_1,\ldots,C_{\ell}\}$ is defined as follows: 
\[
\SCinj(\SCtclu{i}) = 
\begin{cases}
\SCclu{\SCmat{i}} &{\mbox{if $i\in\SCus$}}\\
\emptyset         &{\mbox{if $i\notin\SCus$}}
\end{cases}
\]
Moreover, let $\SCmis{\SCinj}:=\sum_{i=1}^k |\SCtclu{i}\setminus\SCinj(\SCtclu{i})|$.
\end{definition}
We have the following lemma.
\begin{lemma}\label{SClem10}
$\SCmis{\SCinj}\leq 2|\SCano|$.
\end{lemma}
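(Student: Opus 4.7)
The plan is to decompose the sum $\SCmis{\SCinj} = \sum_{i=1}^k |\SCtclu{i}\setminus\SCinj(\SCtclu{i})|$ according to whether the label $i$ is centered. For a non-centered label $i \notin \SCus$, we have by definition $\SCtclu{i} \subseteq \SCano$, and $\SCinj(\SCtclu{i}) = \emptyset$, so $|\SCtclu{i}\setminus\SCinj(\SCtclu{i})| = |\SCtclu{i}| = |\SCtclu{i} \cap \SCano|$. For a centered label $i \in \SCus$, we have $\SCinj(\SCtclu{i}) = \SCclu{\SCmat{i}}$ and Lemma \ref{SClem9} gives the bound $|\SCtclu{i}\setminus\SCclu{\SCmat{i}}| \leq |\SCtclu{i}\cap\SCano|+|\SCclu{\SCmat{i}}\cap\SCano|$.

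Combining the two cases, I obtain
\[
\SCmis{\SCinj} \leq \sum_{i=1}^k |\SCtclu{i}\cap\SCano| + \sum_{i\in\SCus} |\SCclu{\SCmat{i}}\cap\SCano|.
\]
The first sum telescopes to $|\SCano|$ because $\SCtcs = \{\SCtclu{1},\ldots,\SCtclu{k}\}$ is a partition of $\VV$.

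For the second sum, the key observation is that the indices $\{\SCmat{i} : i \in \SCus\}$ are pairwise distinct: by definition $\SCc{\SCmat{i}} = i$, so $\SCmat{i} = \SCmat{j}$ would force $i = j$. Since $\SCclu{1},\SCclu{2},\ldots,\SCclu{\SCnum}$ form a partition of $\VV$, the sets $\{\SCclu{\SCmat{i}} : i \in \SCus\}$ are pairwise disjoint subsets of $\VV$, and hence their intersections with $\SCano$ are pairwise disjoint subsets of $\SCano$. Therefore $\sum_{i \in \SCus} |\SCclu{\SCmat{i}} \cap \SCano| \leq |\SCano|$.

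Adding the two contributions yields $\SCmis{\SCinj} \leq 2|\SCano|$, as desired. The proof is almost entirely bookkeeping: the substantive work was done in Lemma \ref{SClem9}, and the only subtlety here is noticing that distinctness of the $\SCmat{i}$'s (which follows immediately from the definition) lets us leverage disjointness of the $\SCclu{\SCmat{i}}$'s to avoid double-counting anomalies on the ``cluster side''. No obstacle is expected.
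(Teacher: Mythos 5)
Your proof is correct and follows essentially the same route as the paper: split by centered versus non-centered labels, apply Lemma \ref{SClem9} for the centered ones, and collapse the two resulting sums using disjointness. In fact your treatment of the second sum is slightly more careful than the paper's, which asserts that $\{\SCinj(\SCtclu{1}),\ldots,\SCinj(\SCtclu{k})\}$ is a partition of $\VV$ (the images need not cover $\VV$); your argument via distinctness of the $\SCmat{i}$'s and pairwise disjointness of the $\SCclu{\SCmat{i}}$'s yields the needed inequality $\sum_{i\in\SCus}|\SCclu{\SCmat{i}}\cap\SCano|\leq|\SCano|$ cleanly.
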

\begin{proof}
For $i\notin\SCus$ we have $\SCtclu{i}\subseteq\SCano$ and $\SCinj(\SCtclu{i})=\emptyset$ so that 
\[
|\SCtclu{i}\setminus\SCinj(\SCtclu{i})|=|\SCtclu{i}|=|\SCtclu{i}\cap\SCano|=|\SCtclu{i}\cap\SCano|+|\emptyset|=|\SCtclu{i}\cap\SCano|+|\SCinj(\SCtclu{i})\cap\SCano|.
\] 
On the other hand, for $i\in\SCus$ we have $\SCinj(\SCtclu{i})=\SCclu{\SCmat{i}}$ so that, by Lemma \ref{SClem9}, we can write 
\[
|\SCtclu{i}\setminus\SCinj(\SCtclu{i})|\leq|\SCtclu{i}\cap\SCano| + |\SCinj(\SCtclu{i})\cap\SCano|\,.
\] 
Hence, in both cases, for all $i \in \{1,\ldots,k \}$ we have 
\[
|\SCtclu{i}\setminus\SCinj(\SCtclu{i})|
\leq|
\SCtclu{i}\cap\SCano|+|\SCinj(\SCtclu{i})\cap\SCano|\,,
\]
implying
\begin{equation}\label{e:third}
\SCmis{\SCinj} \leq 
\sum_{i=1}^k \Bigl(|\SCtclu{i}\cap\SCano|+|\SCinj(\SCtclu{i})\cap\SCano|\Bigl)\,.
\end{equation}
Now, both $\{\SCtclu{1},\ldots, \SCtclu{k}\}$ and $\{\SCinj(\SCtclu{1}),\ldots, \SCinj(\SCtclu{k})\}$ are a partition of $\VV$, implying
\[
|\SCano| 
= 
\sum_{i=1}^k|\SCtclu{i}\cap\SCano| 
= 
\sum_{i=1}^k|\SCinj(\SCtclu{i})\cap\SCano|\,.
\]
Plugging back into (\ref{e:third}) yields the claimed result.
\end{proof}
Next, observe that, by its very definition, $\SCsmt{\PP}{\DD}$ can be rewritten as
\begin{equation}\label{e:fourth}
\SCsmt{\PP}{\DD} 
= 
\sum_{v\in\VV}|(\SCtclu{\SClab{v}}\setminus\SCgam{v})\cup(\SCgam{v}\setminus\SCtclu{\SClab{v}})|\,.
\end{equation}
\begin{lemma}\label{SClem12}
We have
$\SCsmt{\SCes}{\SCtcs}\geq (1-b)\sum_{i=1}^{\SCtnum}d_i|\SCtclu{i}\cap\SCano|$\,.
\end{lemma}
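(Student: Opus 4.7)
The plan is to unpack the definitions of $b$-anomaly and $\SCsmt{\SCes}{\SCtcs}$ and show that each anomaly contributes at least $(1-b)\,d_{\SClab{v}}$ to the Hamming error, then sum over anomalies and regroup by cluster.

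First I would use the representation of $\SCsmt{\SCes}{\SCtcs}$ given in equation~(\ref{e:fourth}) and restrict the sum to vertices in $\SCano$, using the fact that every summand is nonnegative. Since $\SCtclu{\SClab{v}}\setminus\SCgam{v}$ and $\SCgam{v}\setminus\SCtclu{\SClab{v}}$ are disjoint, the cardinality of their union equals the sum of the individual cardinalities, which is precisely the numerator of $\jdist(\SCtclu{\SClab{v}},\SCgam{v})$.

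Next, by the definition of a $b$-anomaly, for each $v\in\SCano$ we have
\[
\jdist(\SCtclu{\SClab{v}},\SCgam{v}) = \frac{|\SCtclu{\SClab{v}}\setminus\SCgam{v}|+|\SCgam{v}\setminus\SCtclu{\SClab{v}}|}{|\SCtclu{\SClab{v}}\cup\SCgam{v}|}\geq 1-b,
\]
so the numerator is at least $(1-b)\,|\SCtclu{\SClab{v}}\cup\SCgam{v}|$. The key observation is then that $|\SCtclu{\SClab{v}}\cup\SCgam{v}|\geq |\SCtclu{\SClab{v}}| = d_{\SClab{v}}$, giving a per-anomaly lower bound of $(1-b)\,d_{\SClab{v}}$.

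Finally, I would sum this inequality over $v\in\SCano$ and regroup by cluster index: each cluster $\SCtclu{i}$ contributes a term $d_i$ for every anomaly it contains, so the sum becomes $(1-b)\sum_{i=1}^{k}d_i\,|\SCtclu{i}\cap\SCano|$, yielding the claim. There is no real obstacle here; the only subtlety is making sure the set-difference-to-Jaccard step is applied correctly and that the bound $|\SCtclu{\SClab{v}}\cup\SCgam{v}|\geq d_{\SClab{v}}$ (rather than the tighter but unhelpful $|\SCtclu{\SClab{v}}\cup\SCgam{v}|\geq \max(d_{\SClab{v}},|\SCgam{v}|)$) is the right quantity to keep, as it is exactly what produces the factor $d_i$ that later combines favorably with Lemma~\ref{SClem10} in the forthcoming proof of Theorem~\ref{t:main2}.
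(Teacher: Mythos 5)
Your proposal is correct and follows essentially the same route as the paper's proof: restrict the sum in equation~(\ref{e:fourth}) to $\SCano$, use the anomaly condition to lower-bound the symmetric-difference numerator by $(1-b)\,|\SCtclu{\SClab{v}}\cup\SCgam{v}| \geq (1-b)\,d_{\SClab{v}}$, and regroup by cluster. The paper compresses the Jaccard-to-numerator step into one line, but the argument is identical.
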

\begin{proof}
Fix class $i \in \{1,\ldots,k\}$ and vertex $v\in\SCtclu{i}\cap\SCano$. Then $v\in\SCano$ implies 
$\jdist(\SCtclu{\SClab{v}},\SCgam{v})\geq 1-b$, which in turn yields 
\[
|(\SCtclu{\SClab{v}}\setminus\SCgam{v})\cup(\SCgam{v}\setminus\SCtclu{\SClab{v}})| 
\geq (1-b)d_i\,,
\]
thereby concluding that for all fixed $i$ 
\[
\sum_{v\in\SCtclu{i}\cap\SCano}
|(\SCtclu{\SClab{v}}\setminus\SCgam{v})\cup(\SCgam{v}\setminus\SCtclu{\SClab{v}})|    
\geq (1-b)\,|\SCtclu{i}\cap\SCano|\,d_i\,. 
\]
Since $\SCano=\bigcup_{i=1}^k (\SCtclu{i}\cap\SCano)$, being the sets $\SCtclu{i}\cap\SCano$, $i = 1, \ldots, k$, pairwise disjoint, we can write
\begin{align*}
\sum_{v\in\SCano}
|(\SCtclu{\SClab{v}}\setminus\SCgam{v})\cup(\SCgam{v}\setminus\SCtclu{\SClab{v}})|
&=
\sum_{i=1}^{k}\sum_{v\in\SCtclu{i}\cap\SCano}
|(\SCtclu{\SClab{v}}\setminus\SCgam{v})\cup(\SCgam{v}\setminus\SCtclu{\SClab{v}})|\\
&\geq
(1-b)\,\sum_{i=1}^{k} |\SCtclu{i}\cap\SCano|\,d_i\,.
\end{align*}
Thus, from (\ref{e:fourth}), and the fact that $\SCano\subseteq\SCver$ the result immediately follows.
\end{proof}
\begin{lemma}
\label{SClem13}
The number $|\SCano|$ of $b$-anomalies can be upper bounded as
\[
|\SCano| \leq \min_{j = 1,\ldots,k} 
\left( \frac{1}{d_j(1-b)}\SCsmt{\SCes}{\SCtcs}+\sum_{i=1}^{j-1}d_i \right)\,.
\]
\end{lemma}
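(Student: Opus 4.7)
The plan is to combine Lemma \ref{SClem12} with the simple observation that $\SCano$ is partitioned by the sets $\SCtclu{i}\cap\SCano$, and then to trade off the bound based on the index $j$. Concretely, since the $\SCtclu{i}$ are pairwise disjoint and cover $\SCver$, we have
\[
|\SCano| \;=\; \sum_{i=1}^{k} |\SCtclu{i}\cap\SCano|\,.
\]
The idea is that this sum can be split at an arbitrary threshold $j$: for $i<j$ we use the crude bound $|\SCtclu{i}\cap\SCano|\leq |\SCtclu{i}|=d_i$, which collectively contribute $\sum_{i=1}^{j-1}d_i$; for $i\geq j$ we instead leverage Lemma \ref{SClem12}, which gives the weighted inequality $\sum_i d_i\,|\SCtclu{i}\cap\SCano|\leq \SCsmt{\SCes}{\SCtcs}/(1-b)$.

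First I would write the split as
\[
|\SCano| \;\leq\; \sum_{i=1}^{j-1} d_i \;+\; \sum_{i=j}^{k} |\SCtclu{i}\cap\SCano|\,.
\]
Then I would use the fact that the $d_i$ are sorted nondecreasingly, so that $d_i\geq d_j$ for every $i\geq j$. Multiplying and dividing by $d_i/d_j$ in the tail sum,
\[
\sum_{i=j}^{k}|\SCtclu{i}\cap\SCano|
\;\leq\;
\frac{1}{d_j}\sum_{i=j}^{k} d_i\,|\SCtclu{i}\cap\SCano|
\;\leq\;
\frac{1}{d_j}\sum_{i=1}^{k} d_i\,|\SCtclu{i}\cap\SCano|\,.
\]
Applying Lemma \ref{SClem12} to the last sum yields $\frac{\SCsmt{\SCes}{\SCtcs}}{d_j(1-b)}$. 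Combining the two parts gives the bound claimed for every particular $j$, and taking the minimum over $j\in\{1,\ldots,k\}$ concludes the argument.

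There is really no hard step here; the only thing to be careful about is that the tail-sum trick requires the sorting $d_1\leq d_2\leq \cdots \leq d_k$ assumed in the theorem statement, and that dropping the negative-index terms when extending $\sum_{i=j}^k$ to $\sum_{i=1}^k$ is legitimate because every summand is nonnegative. The whole proof fits in a few lines once Lemma \ref{SClem12} is in hand.
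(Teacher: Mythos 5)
Your argument is correct and is essentially identical to the paper's proof: the same split of $\sum_{i=1}^{k}|\SCtclu{i}\cap\SCano|$ at index $j$, the same crude bound $|\SCtclu{i}\cap\SCano|\leq d_i$ for $i<j$, and the same use of the sorting $d_i\geq d_j$ to reduce the tail to the weighted sum controlled by Lemma \ref{SClem12}. Nothing is missing.
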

\begin{proof}
For any $j = 1, \ldots,k$ we can write
\[
|\SCano|=\sum_{i=1}^k|\SCtclu{i}\cap\SCano|
=
\sum_{i=1}^{j-1}|\SCtclu{i}\cap\SCano|+\sum_{i=j}^{\SCtnum}|\SCtclu{i}\cap\SCano|
\leq
\sum_{i=1}^{j-1}d_i+\sum_{i=j}^{\SCtnum}|\SCtclu{i}\cap\SCano|
\]
so all that is left to prove is that the last sum in the right-hand side is at most
$\frac{1}{d_j(1-b)}\SCsmt{\SCes}{\SCtcs}$.

Since, for all classes $i$ such that $i\geq j$, we have $d_i\geq d_j$, we can write
\[
\sum_{i=j}^{k}|\SCtclu{i}\cap\SCano|
\leq
\sum_{i=j}^{k}\frac{d_i}{d_j}|\SCtclu{i}\cap\SCano|
\leq
\frac{1}{d_j}\sum_{i=1}^{k} d_i\,|\SCtclu{i}\cap\SCano|
\leq 
\frac{1}{d_j(1-b)}\SCsmt{\SCes}{\SCtcs}\,,
\]
where the last inequality derives from Lemma \ref{SClem12}. This concludes the proof.
\end{proof}
We are now ready to combine to above lemmas into the proof of Theorem \ref{t:main2}.
\begin{proof}(Theorem \ref{t:main2})
Direct from Lemmas \ref{SClem10} and \ref{SClem13} we have 
\[
\SCmis{\SCinj}
\leq 
\min_{j=1,\ldots,k}\left(\frac{2}{d_j\,(1-b)}\SCsmt{\SCes}{\SCtcs}
+\sum_{i=1}^{j-1}d_i\right)\,.
\]
We then optimize for $b$ by selecting $b=\frac{1+a}{2}$, and then for $a$ by setting $a = 2/3$, so as to fulfil conditions (\ref{e:abcond}). The result follows by the fact that 
$\SCer{\mathcal{C}}{\mathcal{D}} \leq \SCmis{\SCinj}$, 
for $\SCer{\mathcal{C}}{\mathcal{D}}$ is a minimum over all possible cluster maps $\mathcal{D} \rightarrow \mathcal{C}$, while $\SCinj$ is just the one in Definition \ref{d:clustermap}.
\end{proof}

\subsection{Proof of Theorem \ref{t:main3}}

\begin{proof}
For ease of proof, we assume that $d_j$ is even for all $j$ (adapting the proof to the general case is trivial). We consider two cases:
\begin{enumerate}
\item $\sigma\geq\frac{1}{2}\sum_{j=1}^k{d_j^2}$;
\item $\sigma<\frac{1}{2}\sum_{j=1}^k{d_j^2}$\,.
\end{enumerate}
For the first case we choose, for every $j = 1,\ldots,k$, sets $P_j^+$ and $P_j^-$ such that $|P_j^+|=|P_j^-|=d_j/2$ and $P_j^+\cup P_j^-=D_j$. We then construct the similarity graph $(\SCver,E_{\SCes})$, where clustering $\SCes$ is made up of the $2k$ clusters
$\{P_j^+\,:\,j = 1,\ldots,k\}\cup \{P_j^-\,:\,j = 1,\ldots,k\}$. 
Since the algorithm is consistent, we must have $\SCcs=\SCes$. Now, let $f$ be an injection from $\SCtcs$ to $\SCcs$, and consider any $j = 1,\ldots,k$.
If $f(\SCtclu{j})\in\{P_j^+, P_j^-\}$ then we have $|\SCtclu{j}\setminus f(\SCtclu{j})|=d_j/2$, and otherwise $|\SCtclu{j}\setminus f(\SCtclu{j})|=d_{j}$, so that 
\[
\sum_{j=1}^k|\SCtclu{j}\setminus f(\SCtclu{j})| 
\geq
\frac{1}{2}\sum_{j=1}^k d_{j}= n/2\,.
\]
Since $f$ is arbitrary, this shows that $\SCer{\mathcal{C}}{\mathcal{D}}\geq\frac{n}{2}$.
Moreover, we observe that the only incorrect similarity/dissimilarity predictions of $\SCes$ with respect to $\SCtcs$ are those between $P_j^+$ and $P_j^-$, for every $j$, which gives us $2|P_j^+|\cdot|P_j^-|=d_{j}^2/2$ incorrect predictions for every $j$. This implies that 
$\SCsmt{\SCes}{\SCtcs}
=
\sum_{j=1}^k{d_j^2}/{2}$, 
which is no greater than $\sigma$, thereby completing the proof for the first case.

We now turn to the second case. Let $j^o \in \{1,\ldots,k\}$ be such that 
\[
\frac{1}{2}\sum_{i=1}^{j^o-1}{d_i^2} 
\leq 
\sigma 
< 
\frac{1}{2}\sum_{i=1}^{j^o}{d_{i}^2},
\]
and $\omega:= \sigma-\frac{1}{2}\sum_{i=1}^{j^o-1}{d_i^2}$. Notice that $\omega\leq d_{j^o}^2/2$. 
We choose, for every $j< j^o$, sets $P_j^+$ and $P_j^-$ such that 
$|P_j^+|=|P_j^-| = d_j/2$ and $P_j^+\cup P_j^- = D_j$. 
Let $c=\lfloor\omega/2d_{j^o}\rfloor$, and note that $c\leq d_{j^o}/4<d_{j^o}/2$. 
We can hence define subsets $X, Y\subseteq D_{j^o}$ such that 
$|X|=c$, $X\cup Y=D_{j^o}$ and $X\cap Y=\emptyset$. 

We construct the similarity graph $(\SCver,E_{\SCes})$, where clustering $\SCes$ is made up of the $k+j^o$ clusters
\[
\{P_j^+\,:\,j = 1,\ldots,j^o-1\}\cup \{P_j^-\,:\,j = 1,\ldots,j^o-1\}
\cup
\{X,Y\}
\cup
\{D_j\,:\,j > j^o\}\,.
\] 
Again, since the algorithm is consistent, we must have $\SCcs=\SCes$. As before, let $f$ be an arbitrary injection from $\SCtcs$ to $\SCcs$, and consider any $j < j^o$. Then if
$f(D_j)\in\{P_j^+, P_j^-\}$ we have $|D_j\setminus f(D_j)|=d_j/2$, otherwise 
$|D_j\setminus f(D_j)|=d_j$, so that $|D_j\setminus f(D_j)|\geq d_j/2$ holds for any $j < j^o$.  
Further, 
if $f(D_{j^o})=X$ then $|D_{j^o}\setminus f(D_{j^o})|=d_{j^o}-c$, 
if $f(D_{j^o})=Y$ then $|D_{j^o}\setminus f(D_{j^o})|=c$, 
and otherwise 
$|D_{j^o}\setminus f(D_{j^o})|=d_{j^o}$. 
In any case, since $c < d_{j^o}/2$, we have 
$|D_{j^o}\setminus f(D_{j^o})|\geq c$. This allows us to conclude that
\begin{align*}
\SCer{\mathcal{C}}{\mathcal{D}} 
&= 
\sum_{j=1}^k |D_{j}\setminus f(D_{j})|\\
&\geq
c + \frac{1}{2}\,\sum_{j=1}^{j^o-1} d_j\\
&= 
\lfloor \omega/2d_{j^o}\rfloor + \frac{1}{2}\,\sum_{j=1}^{j^o-1} d_j\\
&\geq 
\frac{\omega}{2d_{j^o}} - 1 + \frac{1}{2}\,\sum_{j=1}^{j^o-1} d_j\\
&= 
\frac{\sigma}{2d_{j^o}} - 1 - \frac{1}{4d_j^o}\,\sum_{j=1}^{j^o-1} d_j^2
+ \frac{1}{2}\,\sum_{j=1}^{j^o-1} d_j\\
&= 
\frac{\sigma}{2d_{j^o}} - 1 + \frac{1}{2}\,\sum_{j=1}^{j^o-1} d_j\left(1-\frac{d_j}{2d_{j^o}} \right)\\
&\geq
\frac{\sigma}{2d_{j^o}} - 1 + \frac{1}{4}\,\sum_{j=1}^{j^o-1} d_j\,.
\end{align*}
Finally, notice that the only incorrect similarity/dissimilarity predictions of $\SCes$ with respect to $\SCtcs$ are those between $P_j^+$ and $P_j^-$, for every $j<j^o$, and those between $X$ and $Y$, which gives us 
$2|P_j^+|\cdot|P_j^-| = d_{j}^2/2$ incorrect predictions for every $j<j^o$, and an additional 
$2|X|\cdot|Y| = 2c(d_{j^o}-c) \leq 2cd_{j^o} \leq \omega$ 
incorrect predictions between $X$ and $Y$. This implies that\
\[
\SCsmt{\SCes}{\SCtcs} 
\leq
\omega + \sum_{j=1}^{j^o-1}{d_{j}^2}/{8}
\] 
which is in turn bounded from above by $\sigma$. 
This completes the proof for the second case.
\end{proof}

The following simple lemma is of preliminary importance for the proof of Theorem \ref{th:batch}.
\begin{lemma}\label{l:ER}
Let $H = (\VV,E)$ be an Erdos-Renyi $G(n,p)$ graph. 
For each subgraph $H'(V',E') \subseteq H$ with 
$n'=|V'|$ nodes, when $p=\frac{\lambda \log n'}{n'}$ the following {\em separation property} holds: As $n'$ approaches infinity, the expected number $z$ of isolated vertices in $G'$ equals $(n')^{1-\lambda}$. Furthermore, in the special case when $n'=\frac{1}{p}$, we always have $z \ge \frac{1}{pe}$.
\end{lemma}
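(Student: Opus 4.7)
The plan is to observe that the Lemma is really a statement about the vertex-induced subgraph on $V'$: since the edges of $H$ are independent with probability $p$, the subgraph induced on any fixed vertex set $V'$ of size $n'$ is itself distributed as $G(n',p)$. So I would first reduce to computing the expected number of isolated vertices in $G(n',p)$.

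Next, I would write $z$ in closed form by linearity of expectation. A fixed vertex $v \in V'$ is isolated iff all $n'-1$ potential edges incident to it inside $V'$ are absent, which has probability $(1-p)^{n'-1}$. Summing over the $n'$ vertices gives
\[
z \;=\; n'\,(1-p)^{n'-1}.
\]
For the asymptotic claim, I would substitute $p = \lambda \log n'/n'$ and use the elementary fact that $(1-p)^{n'-1} = \exp\bigl((n'-1)\log(1-p)\bigr)$. As $n' \to \infty$, $p \to 0$ and $(n'-1)\log(1-p) = -p(n'-1)(1+o(1)) = -\lambda \log n'\,(1+o(1))$, so $(1-p)^{n'-1} \sim (n')^{-\lambda}$ and hence $z \sim (n')^{1-\lambda}$. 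No subtlety here beyond bookkeeping with the $o(1)$ terms.

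For the second (non-asymptotic) claim, with $n' = 1/p$, the closed form becomes $z = (1/p)(1-p)^{1/p-1}$, so the target inequality $z \ge 1/(pe)$ reduces to
\[
(1-p)^{1/p - 1} \;\ge\; \frac{1}{e},
\]
equivalently, after taking logarithms and multiplying by $p$, to the inequality
\[
(1-p)\log(1-p) \;\ge\; -p \qquad \text{for all } p \in [0,1).
\]
I would prove this by defining $h(p) := p + (1-p)\log(1-p)$; then $h(0)=0$ and $h'(p) = -\log(1-p) \ge 0$ on $[0,1)$, so $h \ge 0$ throughout, completing the proof.

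The only potentially delicate step is the non-asymptotic inequality $(1-p)^{1/p-1} \ge 1/e$: the standard bound $(1-p)^{1/p} \le 1/e$ goes in the opposite direction, so one must leverage the extra factor $(1-p)^{-1}$ carefully. The monotonicity argument above gives the cleanest proof and is the main content of the claim.
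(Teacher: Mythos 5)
Your proof is correct and follows essentially the same route as the paper: compute the isolation probability $(1-p)^{n'-1}$ for a fixed vertex, apply linearity of expectation to get $z = n'(1-p)^{n'-1}$, and then specialize to $p = \lambda\log n'/n'$ for the asymptotic claim and to $n' = 1/p$ for the non-asymptotic one. The only difference is that you supply the verification of $(1-p)^{1/p-1}\ge 1/e$ (via the monotonicity of $h(p)=p+(1-p)\log(1-p)$), which the paper merely asserts is ``immediate to verify.''
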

\begin{proof}
In order to prove these properties, it suffices to observe that, given any node in $V'$, the probability that it is isolated in $G'$ is equal to $(1-p)^{n'-1}$, which in turn is equal to  
$e^{-\lambda\log n'}=(n')^{-\lambda}$ as $n'$
approaches infinity. Hence we have $z=(n')^{1-\lambda}$.
By a similar argument, it is immediate to verify that in the case when $n'=\frac{1}{p}$ we have $z=\frac{1}{p}(1-p)^{\frac{1}{p}-1}$ which is never smaller than $\frac{1}{e p}$.
\end{proof}

\subsection{Proof of Theorem \ref{th:batch}}

\begin{proof}
Let $G' = (\VV,E')$ denote the undirected graph whose edge set $E'$ is made up of all pairs of vertices drawn in $S$. Since $S$ is drawn uniformly at random, $G'$ turns out to be an Erdos-Renyi graph $G'(n,p)$.


Setting $\lambda=2$ in Lemma~\ref{l:ER}, we have that for all clusters $C \in \scC$ such that $\frac{2\log |C|}{|C|} \le p$, cluster $C$ can be {\em completely} detected by \bnc\ (line 3 in Algorithm \ref{alg:lbanca}), with probability at least $\frac{1}{|C|}$.
Hence, the expected number of misclassification errors made when detecting such clusters is upper bounded by $1$ per cluster. In order to satisfy the assumption 
$\frac{2\log |C|}{|C|} \le p$, the size of these clusters 
must be equal to a value 
$\tau=\Omega\left(\rho\log\rho\right)$, where we set $\rho=\frac{1}{p}$.

%
Finally, we can conclude the proof observing that the total number of misclassification errors is bounded in expectation by the sum of the following two quantities: (i) the number of clusters larger than $\tau$, which in turn is bounded by $k$, and (ii) the total number of nodes belonging to the clusters smaller or equal to $\tau$, which in turn is bounded by $k\tau$:
\begin{equation}\label{e:firsteq}
\E\left[\SCer{\scC_{\mathrm{final}}}{Y}\right] = \scO\left(k(1+\tau)\right)=\scO\left(k\rho\log\rho\right)\,,
\end{equation}
thereby concluding the proof.
\end{proof}

\subsection{Proof of Theorem \ref{th:batchlb}}

\begin{proof}


As in the proof of Theorem \ref{th:batch}, we denote by $G' = (\VV,E')$ the undirected graph whose edge set $E'$ is made up of all pairs of vertices drawn in the training set $S$. Since $S$ is drawn uniformly at random, $G'$ turns out to be an Erdos-Renyi graph.


The basic idea in this proof is to construct a collection $\scH$ of $z$ disjoint subsets of $\VV$, call them $H_1, H_2 \ldots, H_z$, and, for all $j \in \{1,\ldots,z\}$, to {\em randomly} label all nodes of each subset $H_j$ using only a pair of classes of $\{1,\ldots,k\}$. These $z$ pairs of classes must be distinct and disjoint. The random labeling is accomplished in such a way that no algorithm can exploit the training set nor the information carried by $G$ to guess how each $H_j$ is labeled, while we always guarantee $\Phi^R_G(Y)\leq b$. More specifically, $\scH$ is created so as to satisfy the following three properties: 

\textbf{Property (i)} For all $j = 1,\ldots,z$, no pair of nodes in $H_j$ are connected by an edge in the training graph representation $G'$, i.e. for each pair of nodes $u,v \in H_j$, we have $(u,v) \not\in \PTr$.

\textbf{Property (ii)} Consider {\em any} possible vertex labeling from $\{1,\ldots,k\}$ of all sets in $\scH$ that uses at most $k-1$ classes, say $1,\ldots,k-1$.
Then if we assign  label $k$ (which is never used for vertices in the sets of $H_j$, to all the remaining nodes in $\VV$, i.e. those in $V\setminus \cup_{j=1}^z H_j$, we can {\em always} ensure 
that $\Phi^R_B \leq b$. 

\textbf{Property (iii)} For all $j = 1,\ldots, z$, we have that the expected size of $H_j$ (over the random draw of the training set $S$) is larger than $\frac{n^2}{2m\,e}$, if $m > \frac{n}{2}$, where $e$ is the base of natural logarithms, while it is $\Theta(b)$ if $m \le \frac{n}{2}$.


Figure~\ref{f:lb} provides a pictorial explanation of the  randomized labeling strategy we are going to describe.


\begin{figure}[t]
\begin{picture}(00,70)(-50,110)
\scalebox{0.5}{\includegraphics{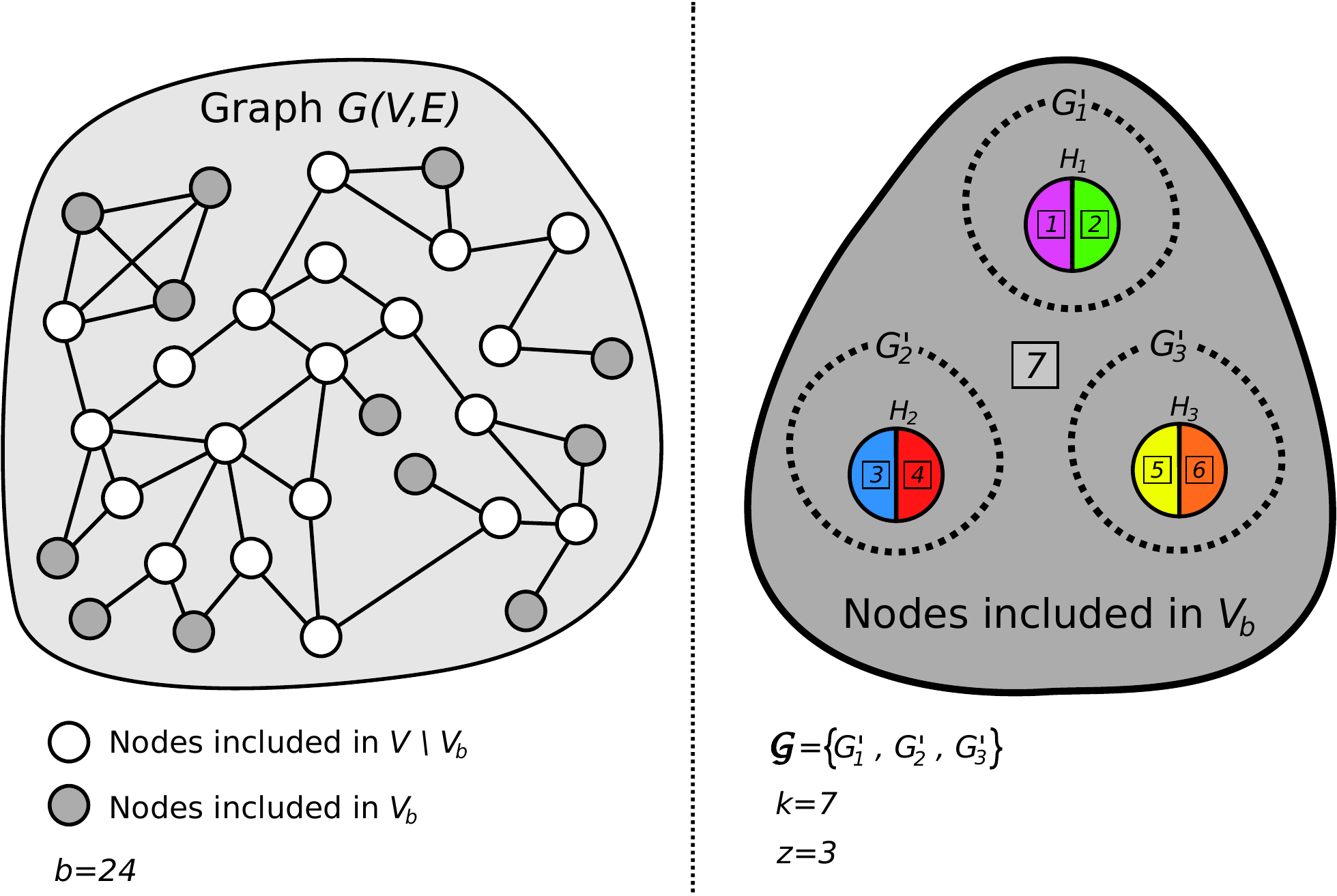}}
\end{picture}
\vspace{1.5in}
\caption{\label{f:lb}Illustration of the randomized labeling that achieves the lower bound in Theorem \ref{th:batchlb}.
\textbf{Left:} The side information graph $G=(\VV,E)$. Here $b=24$. $V_b$ is therefore made up of all $\frac{24}{2}=12$ grey nodes in the picture, which are the $12$ nodes 
with smallest $r(v)$ values among all nodes $v \in V$.
Since $R(12)<24$, if all white nodes are assigned to the same class we can ensure $\Phi^R_G < b = 2 \cdot 12 = 24$, independent of the chosen labels for the grey nodes.
\textbf{Right:} The grey area includes all the nodes of $V_b$.
For the depicted graph, we have $z=\left\lfloor \frac{2m}{n^2}\,|V_b|\right\rfloor=3$ and $k=7$.
In this case we thus have $\left\lfloor\ \frac{k-1}{2} \right\rfloor=3$. 
$\scG$ is the collection of the $3$ vertex-disjoint subgraphs 
$G'_1$, $G'_2$ and $G'_3$. 
The node set size of each of these subgraphs is equal to $\left\lfloor\frac{n^2}{2m}\right\rfloor$. 
The subsets of isolated vertices in these $3$ subgraphs are $H_1$, $H_2$, and $H_3$, 
which are depicted in this figure by the bicoloured circles. Each color represents a class.
For each $j$, the expected size of $H_j$ must be linear in the size of the node set of $G'_j$.
For $j = 1, 2, 3$, set $H_j$ is labeled by selecting uniformly at random a class between the two classes (or colors) $2j-1$ and $2j$.
All the remaining nodes in the grey area of this picture (together with the white nodes in the picture on the left) are given the same class $7$. 
Hence, for each pair of nodes $u$ and $v$ both belonging to $H_j$ for some $j$, we must have $(u,v) \not\in \PTr$.
On the contrary, for each pair $u$ and $v$ with $u \in H_j$, for some $j$, and $v \not\in H_j$, 
we must have $y_{u,v}=0$. Neither the information of the training set nor the graph topology of $G$ can be used to predict how the nodes in $H_1$, $H_2$, and $H_3$, are labeled. In fact, {\em any} algorithm will make $\frac{1}{2}$ mistakes in expectation over the randomized 
labeling on these nodes. On the other hand, it holds by construction that $\Phi^R_G \leq b$.
}

\end{figure}


We now describe in detail the randomized labeling strategy (a randomized similarity matrix $Y$ representing a clustering with $k$ clusters), and derive a lower bound for $\E_{Y}[\SCer{\CC}{Y}]$ when $\scH$ satisfies all of the above properties. 

Let $z \leq \lfloor\frac{k-1}{2}\rfloor$. Once we constructed such a collection $\scH$ of clusters, we associate a distinct pair of classes in $\{1,\ldots,k\}$ with each $H_j$ in such a way that all these class pairs are distinct and disjoint. This allows us to always leave one class out (say, class $k$) for labeling all remaining vertices in $\VV$. In particular,
we associate with $H_j$ with the class pair $(2j-1, 2j)$, and then adopt the following randomized strategy:

For all $j= 1,\ldots,z$, set $H_j$ is split uniformly at random into two subsets $H'_{j}$ and $H''_{j}$, and we label $H'_j$ by class $2j-1$ and $H''_i$ by class $2j$. All remaining nodes in
$\VV \setminus \cup_{j=1}^z H_j$ are labeled with class $k$. 

This randomized labeling strategy ensures that, in order to guess the true clustering $Y$, no learning algorithm can exploit the information provided by $S$, since for all node pairs $(v,w)$ with $v \in H_j$, for some $j \in \{1,\ldots, z\}$, one of two cases hold:

\textit{Case (a):} $w \in H_j$, which implies that $(v,w) \not\in \PTr$, because of Property (i). We have therefore no training set information related to the similarity of nodes laying in the same set $H_j$.

\textit{Case (b):} $w \notin H_j$. In this case, whenever $(v,w) \in \PTr$, we {\em always} have $y_{v,w}=0$, and this information cannot be exploited to guess the randomized labeling of $H_j$.

In short, no training information can be exploited to guess how each set $H_j$ is split into the two subsets $H'_{j}$ and $H''_{j}$. Furthermore, the side information graph $G$ cannot be exploited because the randomized labeling is completely independent of the $G$'s topology, while Property (ii) ensures that $\scH$ is selected in such a way that we always have $\Phi^R_G \leq b$. This entails that any clustering algorithm will incur an expected number of misclassification errors proportional to 
\[
\sum_{j=1}^z |H_j| = \Omega\left(\min\left\{\frac{n^2}{m}\,z, n\right\}\right),
\] 
the latter equality deriving from Property (iii).


We now turn to describing the detailed construction of $\scH$. We first need to find 
$V_{b} \subset \VV$ such that
%
$|V_b|=\left\lfloor\frac{b}{2}\right\rfloor$, and for any labeling of $\VV$ such that all nodes contained in $\VV \setminus V_b$ are uniformly assigned to the same class, we always have $\Phi^R_G \leq b$. 
All sets $H_1,\ldots, H_z$ are subsets of $V_b$, so as to satisfy Property (ii). Note that the assumption $b \ge 4$ is used here to ensure $|V_b| > 1$.
Thereafter, we will explain how to select the $z$ subsets satisfying Property (i), and show that their size is bounded from below as required by Property (iii). This will lead to the claimed lower bound.

\textbf{Satisfaction of property (ii).} Let $r_{v,w}$ be the effective resistance between nodes $v$ and $w$ in $G = (\VV,E)$, and $r(v) = \sum_{w\,:\, (v,w) \in E} r_{v,w}$. 
Moreover, given any integer $h \le n$, let 
$R(h) = \min_{\{v_1, v_2, \ldots v_h\} \subseteq V} \sum_{\ell = 1}^h r(v_{\ell})$. We have
\[
R(h) 
\le 
\frac{h}{n}\,\sum_{v \in \VV} r(v)
=
\frac{2h}{n}\,\sum_{(v,w) \in E} r_{v,w}
=
\frac{2h(n-1)}{n} 
< 2h\,.
\]
%
%

%
Let now $V_b$ be the subset of $\VV$ containing the $\left\lfloor\frac{b}{2}\right\rfloor$ nodes $v_1, v_2, \ldots v_{\left\lfloor b/2\right\rfloor}$ achieving the smallest values of $r(v)$ over all $v \in \VV$. Hence we must have $R\left\lfloor\frac{b}{2}\right\rfloor \leq b$, which implies that for {\em any} vertex labeling such that all nodes in $\VV \setminus V_b$ are labeled with the same class, we have $\Phi^R_G \leq b$. As anticipated, we will construct $\scH$ using only subsets from $V_b$, and we assign to all nodes in $V \setminus V_b$ the same class, thereby fulfilling Property (ii).





\textbf{Definition of $z$.} Let 
\[
z = \min\left\{f(b,n,m), \left\lfloor\frac{k-1}{2}\right\rfloor\right\},\ \quad\ {\mbox{where}}\quad
f(b,n,m)=\max\left\{\left\lfloor\frac{b\,m}{n^2}\right\rfloor, 1\right\}\,.
\] 

\textbf{Satisfaction of Property (i).}\\ 
Let $\scH'$ be a collection of disjoint subsets of $V_b$ created as follows. If $\left\lfloor\frac{b}{2}\right\rfloor<\left\lfloor\frac{n^2}{2m}\right\rfloor$, then $f(b,n,m)$ and $z$ are both equal to $1$ and $\scH'$ contains only $V_b$. In all other cases, $\scH'$ is generated by selecting uniformly at random $z$ disjoint subsets of $V_b$ such that each node subset contains 
$\left\lfloor\frac{n^2}{2m}\right\rfloor$ nodes (observe that even in the latter case we may have $z=1$ when $k=3$
or $1 \le \left\lfloor\frac{bm}{n^2}\right\rfloor<2$).
The collection of subsets $\scH = \{H_1,\ldots,H_z\}$ 
is constructed as described next.
%
%
%
Let $\scG \equiv \{G'_1, G'_2, \ldots, G'_{z}\}$, where $G'_j$ is the subgraph of $G'$ induced by the nodes in the $j$-th set of $\scH'$. We create $z$-many disjoint subsets $H_1, H_2, \ldots, H_{z}$ by selecting all vertices that are {\em isolated} in each graph of $\scG$, and set 
$\scH\equiv\{H_1, H_2, \ldots, H_z\}$. Property (i) is therefore satisfied. 

\textbf{Satisfaction of property (iii).} 
By definition of $\scH'$, each graph of the collection $\scG$ has $\left\lfloor\frac{n^2}{2m}\right\rfloor$ nodes.
Using the second part of Lemma~\ref{l:ER}, we conclude that 
the expected size of each set in $\scH$ is not smaller than 
$\frac{\left\lfloor n^2/2m \right\rfloor}{e}$.

Now, if $\left\lfloor\frac{b}{2}\right\rfloor\ge\left\lfloor\frac{n^2}{2m}\right\rfloor$, because the set of vertices of each graph in $\scG$ is made up of 
$\left\lfloor\frac{n^2}{2m}\right\rfloor$ nodes, we can conclude that the expected number of isolated nodes contained in each graph of the collection $\scG$ is linear in its vertex set size. If instead we have $\left\lfloor\frac{b}{2}\right\rfloor<\left\lfloor\frac{n^2}{2m}\right\rfloor$, by the way $f(b,n,m)$ is defined and because of property (iii), 
the number of isolated nodes contained in $V_b$ is linear in the size of $V_b$ itself.
Furthermore, the size of the (unique) set of nodes contained in $\scH$ is
linear in $b$, as well as the expected number of mistakes that can be forced.
In fact, the claimed lower bound is $\Omega(b)$ when $\left\lfloor\frac{b}{2}\right\rfloor<\left\lfloor\frac{n^2}{2m}\right\rfloor$.

Hence the collection of sets $\scS$ so generated fulfils at the same time Properties (i), (ii) and (iii).

In order to conclude the proof, we compute our lower bound based on the definition of $z$ and Property (iii).
As anticipated, because of the randomized labeling strategy, the expected number of misclassification errors made by any algorithm is proportional to 
$\sum_{j=1}^z |H_j| = \Omega\left(\min\left\{\frac{n^2}{m}z, n\right\}\right)$. Plugging in the values of $z$ yields
\[
\sum_{j=1}^z |H_j| = \Omega\left(\min\left\{\frac{\min\left\{\max\left\{\left\lfloor\frac{b\,m}{n^2}\right\rfloor, 1\right\}, \left\lfloor\frac{k-1}{2}\right\rfloor\right\}}{m/n^2}, n\right\}\right)
=
\Omega\left(\min\left\{\frac{n^2}{m}\,k, b\right\}\right)\,,
\]
and the proof is concluded.




\end{proof}

\end{document}